\newtheorem{definition}{Definition}
\newtheorem{proposition}{Proposition}
\newtheorem{corollary}{Corollary}
\def\eqref#1{equation~\ref{#1}}
\def\1{\bm{1}}
\def\rvg{{\mathbf{g}}}
\DeclareMathAlphabet{\mathsfit}{\encodingdefault}{\sfdefault}{m}{sl}
\SetMathAlphabet{\mathsfit}{bold}{\encodingdefault}{\sfdefault}{bx}{n}
\newcommand{\data}{\mathcal{D}}
\newcommand{\loss}{\mathcal{L}}
\newcommand{\mybox}{%
    \collectbox{%
        \setlength{\fboxsep}{1pt}%
        \fbox{\BOXCONTENT}%
    }%
}
\newcommand{\arxiv}[1]{\textcolor{black}{#1}}
\newcommand{\icml}[1]{\textcolor{black}{#1}}
\newcommand{\KH}[1]{\textcolor{black}{#1}}
\newcommand{\icmllast}[1]{\textcolor{black}{#1}}
\newcommand{\neurips}[1]{\textcolor{black}{#1}}
\title{Gradient Surgery for Multi-Task Learning}
\author{%
 Tianhe Yu$^1$, Saurabh Kumar$^1$, Abhishek Gupta$^2$, Sergey Levine$^2$,\\
 \textbf{Karol Hausman}$^3$, \textbf{Chelsea Finn}$^1$\\
 Stanford University$^1$, UC Berkeley$^2$, Robotics at Google$^3$\\
 \texttt{tianheyu@cs.stanford.edu} \\
}
\begin{document}

\maketitle

\titlespacing\section{0pt}{2pt plus 2pt minus 2pt}{0pt plus 2pt minus 2pt}
\titlespacing\subsection{0pt}{2pt plus 3pt minus 2pt}{0pt plus 2pt minus 2pt}
\setlength{\textfloatsep}{5pt plus 2.0pt minus 1.0pt}
\setlength{\dbltextfloatsep}{7pt plus 2.0pt minus 1.0pt}

\begin{abstract}
While deep learning and deep reinforcement learning (RL) systems have demonstrated impressive results in domains such as image classification, game playing, and robotic control, data efficiency remains a major challenge. %
Multi-task learning has emerged as a promising approach for sharing structure across multiple tasks to enable more efficient learning. However, the multi-task setting presents a number of optimization challenges, making it difficult to realize large efficiency gains compared to learning tasks independently.
The reasons why multi-task learning is so challenging compared to single-task learning are not fully understood.
\icml{In this work, we identify a set of three conditions of the multi-task optimization landscape that cause detrimental gradient interference,} and develop a simple yet general approach for avoiding such interference between task gradients. We propose a form of gradient surgery that projects a task's gradient onto the normal plane of the gradient of any other task that has a \emph{conflicting} gradient.
On a series of challenging multi-task supervised and multi-task RL problems,  this approach leads to substantial gains in efficiency and performance. Further, it is model-agnostic and can be combined with previously-proposed multi-task architectures for enhanced performance. 
\end{abstract}

\section{Introduction}

While deep learning and deep reinforcement learning (RL) have shown considerable promise in enabling systems to learn complex tasks, the data requirements of current methods make it difficult to learn a breadth of capabilities, particularly when all tasks are learned individually from scratch. A natural approach to such multi-task learning problems is to train a network on all tasks jointly, with the aim of discovering shared structure across the tasks in a way that achieves greater efficiency and performance than solving tasks individually. However, learning multiple tasks all at once results is a difficult optimization problem, sometimes leading to \emph{worse} overall performance and data efficiency compared to learning tasks individually~\citep{parisotto2015actor,rusu2015policydistillation}. These optimization challenges are so prevalent that multiple multi-task RL algorithms have considered using independent training as a subroutine of the algorithm before distilling the independent models into a multi-tasking model~\citep{levine2016end,parisotto2015actor,rusu2015policydistillation,ghosh2017dnc,teh2017distral}, producing a multi-task model but losing out on the efficiency gains over independent training. If we could tackle the optimization challenges of multi-task learning effectively, we may be able to actually realize the hypothesized benefits of multi-task learning without the cost in final performance.

\begin{wrapfigure}{R}{0.59\textwidth}
    \centering
    \includegraphics[width=1.0\linewidth]{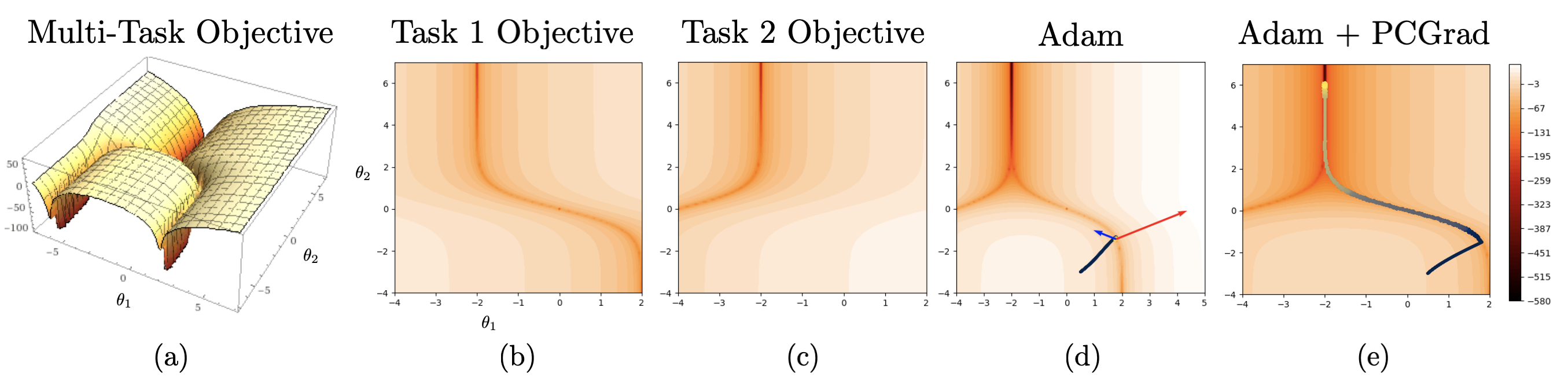}
    \vspace{-0.4cm}
    \caption{\footnotesize Visualization of \arxiv{PCGrad} on a 2D multi-task optimization problem. (a) A multi-task objective landscape. (b) \& (c) Contour plots of the individual task objectives that comprise (a). (d) Trajectory of gradient updates on the multi-task objective using the Adam optimizer. The gradient vectors of the two tasks at the end of the trajectory are indicated by blue and red arrows, where the relative lengths are on a log scale.(e) Trajectory of gradient updates on the multi-task objective using Adam with PCGrad. For (d) and (e), the optimization trajectory goes from black to yellow.}
    \vspace{-0.3cm}
    \label{fig:optlandscape}
\end{wrapfigure}

While there has been a significant amount of research in multi-task learning~\citep{caruana97multitask,ruder2017overview}, the optimization challenges are not well understood. Prior work has described varying learning speeds of different tasks~\citep{chen2017gradnorm,hessel2019popart} and \arxiv{plateaus} in the optimization landscape~\citep{schaul2019ray} as potential causes, whereas a range of other works have focused on the model architecture~\citep{misra2016crossstitch,liu2018attention}. In this work, we instead hypothesize that one of the main optimization issues in multi-task learning arises from gradients from different tasks conflicting with one another in a way that is detrimental to making progress. We define two gradients to be conflicting if they point away from one another, i.e., have a negative cosine similarity. We hypothesize that such conflict is detrimental when a) conflicting gradients coincide with b) high positive curvature and c) a large difference in gradient magnitudes.

As an illustrative example, consider the 2D optimization landscapes of two task objectives in Figure~\ref{fig:optlandscape}a-c.
\icml{The optimization landscape of each task consists of a deep valley, a property that has been observed in neural network optimization landscapes~\citep{goodfellow2014qualitatively}, and the bottom of each valley is characterized by high positive curvature and large differences in the task gradient magnitudes.
Under such circumstances, the multi-task gradient is dominated by one task gradient, which comes at the cost of degrading the performance of the other task. Further, due to high curvature, the improvement in the dominating task may be overestimated, while the degradation in performance of the non-dominating task may be underestimated.
As a result, the optimizer struggles to make progress on the optimization objective. In Figure~\ref{fig:optlandscape}d), the optimizer reaches the deep valley of task 1, but is unable to traverse the valley in a parameter setting where there are \icmllast{\emph{conflicting gradients}, \emph{high curvature}, and \emph{a large difference in gradient magnitudes}} (see gradients plotted in Fig.~\ref{fig:optlandscape}d).
In Section~\ref{sec:analysis}, we find experimentally that this \emph{tragic triad}
also occurs in a higher-dimensional neural network multi-task learning problem.}

The core contribution of this work is a method for mitigating gradient interference by altering the gradients directly, i.e. by performing ``gradient surgery.'' If two gradients are conflicting, we alter the gradients by projecting each onto the normal plane of the other, preventing the interfering components of the gradient from being applied to the network. We refer to this particular form of gradient surgery as \emph{projecting conflicting gradients} (PCGrad). PCGrad is model-agnostic, requiring only a single modification to the application of gradients. Hence, it is easy to apply to a range of problem settings, including multi-task supervised learning
and multi-task reinforcement learning, and can also be readily combined with other multi-task learning approaches, such as those that modify the architecture. \icml{We theoretically prove the local conditions under which PCGrad improves upon standard multi-task gradient descent}, and we empirically evaluate PCGrad on a variety of challenging problems, including multi-task CIFAR classification, multi-objective scene understanding, a challenging multi-task RL domain, and goal-conditioned RL. Across the board, we find PCGrad leads to substantial improvements in terms of data efficiency, optimization speed, and final performance compared to prior approaches, including a more than 30\% absolute improvement in multi-task reinforcement  learning problems. Further, on multi-task supervised learning tasks, PCGrad can be successfully combined with prior state-of-the-art methods for multi-task learning for even greater performance.

\def\rvgone{{\mathbf{g}_1}}
\def\rvgtwo{{\mathbf{g}_2}}
\def\rvg{{\mathbf{g}}}

\section{Multi-Task Learning with PCGrad}

\newcommand{\btheta}{\bm \theta}

While the multi-task problem can in principle be solved by simply applying a standard single-task algorithm with a suitable task identifier provided to the model, or a simple multi-head or multi-output model, a number of prior works~\citep{parisotto2015actor,rusu2015policydistillation,sener2018multi} have found this learning problem to be difficult. %
In this section, we introduce notation, identify possible causes for the difficulty of multi-task optimization, propose a simple and general approach to mitigate it, and theoretically analyze the proposed approach. 

\subsection{Preliminaries: Problem and Notation}
\label{sec:prelim}

\newcommand{\task}{\mathcal{T}}

The goal of multi-task learning is to find parameters $\theta$ of a model $f_\theta$ that achieve high average performance across all the training tasks drawn from a distribution of tasks $p(\task)$. 
More formally, we aim to solve the problem: $\min\limits_{\theta} \mathbb{E}_{\task_i \sim p(\task)} \left[ \mathcal{L}_i(\theta) \right]$, where $\mathcal{L}_i$ is a loss function for the $i$-th task $\task_i$ that we want to minimize. For a set of tasks, $\{\task_i\}$, we denote the multi-task loss as $\loss(\theta) = \sum_i \loss_i(\theta)$, and the gradients of each task as $\mathbf{g}_i = \nabla \loss_i(\theta)$ for a particular $\theta$. (We drop the reliance on $\theta$ in the notation for brevity.)
To obtain a model that solves a specific task from the task distribution $p(\task)$, we define a task-conditioned model $f_\theta(y|x, z_i)$, with input $x$, output $y$, and encoding $z_i$ for task $\task_i$, which could be provided as a one-hot vector or in any other form.

\subsection{\!\!\!The Tragic Triad: Conflicting Gradients, Dominating Gradients, High Curvature}
\label{sec:tragictriad}

\icml{We hypothesize that a key optimization issue in multi-task learning arises from conflicting gradients, where gradients for different tasks point away from one another as measured by a negative inner product. However, conflicting gradients are not detrimental on their own. Indeed, simply averaging task gradients should provide the correct solution to descend the multi-task objective. However, there are conditions under which such conflicting gradients lead to significantly degraded performance. Consider a two-task optimization problem. If the gradient of one task is much larger in magnitude than the other, it will dominate the average gradient. %
If there is also high positive curvature along the directions of the task gradients, then the improvement in performance from the dominating task may be significantly overestimated, while the degradation in performance from the dominated task may be significantly underestimated.
Hence, we can characterize the co-occurrence of three conditions as follows: (a) when gradients from multiple tasks are in conflict with one another (b) when the difference in gradient magnitudes is large, leading to some task gradients dominating others, and (c) when there is high curvature in the multi-task optimization landscape. We formally define the three conditions below.}

\begin{definition}
\label{def:angle}
We define $\phi_{ij}$ as the angle between two task gradients $\mathbf{g}_i$ and $\mathbf{g}_j$. We define the gradients as \textbf{conflicting} when $\cos\phi_{ij}<0$.
\end{definition}

\begin{definition}
\label{def:div}
We define the \textbf{gradient magnitude similarity} between two gradients $\mathbf{g}_i$ and $\mathbf{g}_j$
as 
$
    \Phi(\mathbf{g}_i, \mathbf{g}_j) = \frac{2\|\mathbf{g}_i\|_2\|\mathbf{g}_j\|_2}{\|\mathbf{g}_i\|_2^2 + \|\mathbf{g}_j\|_2^2}.
$
\end{definition}
When the magnitude of two gradients is the same, this value is equal to 1. As the gradient magnitudes become increasingly different, this value goes to zero.

\begin{definition}
\label{def:curvature}
We define \textbf{multi-task curvature} as 
$
    \mathbf{H}(\loss; \theta, \theta') = \int_0^1\nabla\loss(\theta)^T\nabla^2\loss(\theta + a(\theta'-\theta))\nabla\loss(\theta)da,
$
which is the averaged curvature of $\loss$ between $\theta$ and $\theta'$ in the direction of the multi-task gradient $\nabla\loss(\theta)$.
\end{definition}

When $\mathbf{H}(\loss; \theta, \theta') > C$ for some large positive constant $C$, for model parameters $\theta$ and $\theta'$ at the current and next iteration, we characterize the optimization landscape as having high curvature.

\icml{We aim to study the tragic triad and observe the presence of the three conditions through two examples. First, consider the two-dimensional optimization landscape illustrated in Fig.~\ref{fig:optlandscape}a, where the landscape for each task objective corresponds to a deep and curved valley with large curvatures (Fig.~\ref{fig:optlandscape}b and~\ref{fig:optlandscape}c). The optima of this multi-task objective correspond to where the two valleys meet. More details on the optimization landscape are in Appendix~\ref{app:optimization_landscape_details}. Particular points of this optimization landscape exhibit the three described conditions, and we observe that, the Adam~\citep{kingma2014adam} optimizer stalls precisely at one of these points (see Fig.~\ref{fig:optlandscape}d), preventing it from reaching an optimum. This provides some empirical evidence for our hypothesis. Our experiments in Section~\ref{sec:analysis} further suggest that this phenomenon occurs in multi-task learning with deep networks.
Motivated by these observations, we develop an algorithm that aims to alleviate the optimization challenges caused by conflicting gradients, dominating gradients, and high curvature, which we describe next.}

\iffalse
\icml{We also aim to detect if a similar phenomenon occurs in multi-task learning with a neural network on the multi-task CIFAR-100 domain (see Section~\ref{sec:experiments} for details).
\KH{To analyze the extent of these three conditions, we measure the occurrence of conflicting gradients, the occurrence of dominating gradients, and the estimated curvature along the direction of the multi-task gradient.}
%
We indeed observe a significant occurrence of these three conditions when training deep neural networks on multi-task problems --- the percentage of the conflicting gradients is at least 38\%, a gradient dominates XX\% of the time, and the curvature along the gradient remains positive throughout training.}
\todo{Kevin -- fill in the above statistic about dominating gradients, in a way that makes sense}
\fi

\subsection{\!\!\!PCGrad: Project Conflicting Gradients}\label{sec:pcgrad}
\newcommand{\bg}{\mathbf{g}}
\newcommand{\batch}{\mathcal{B}}

Our goal is to break one condition of \icml{the tragic triad} by directly altering the gradients themselves to prevent conflict. In this section, we outline our approach for altering the gradients. \icml{In the next section, we will theoretically show that de-conflicting gradients can benefit multi-task learning when dominating gradients and high curvatures are present}.

To be maximally effective and widely applicable, we aim to alter the gradients in a way that allows for positive interactions between the task gradients and does not introduce assumptions on the form of the model. Hence, when gradients do not conflict, we do not change the gradients.
When gradients do conflict, the goal of PCGrad is to modify the gradients for each task so as to minimize negative conflict with other task gradients, which will in turn mitigate under- and over-estimation problems arising from \icml{high curvature}.

To deconflict gradients during optimization, PCGrad adopts a simple procedure: if the gradients between two tasks are in conflict, i.e. their cosine similarity is negative, we project the gradient of each task onto the normal plane of the gradient of the other task. 
This amounts to removing the conflicting component of the gradient for the task, thereby reducing the amount of destructive gradient interference between tasks. 
A pictorial description of this idea is shown in Fig.~\ref{fig:conflict}.
\begin{minipage}[t]{\linewidth}
\centering
      \begin{minipage}[t]{0.495\linewidth}
          \begin{algorithm}[H]
            \caption{ PCGrad Update Rule}
            \label{alg:dgrad-o}
            \begin{algorithmic}[1]
            {\small
            \REQUIRE Model parameters $\theta$, task minibatch $\batch =\{\task_k\}$
            \STATE $\bg_k \leftarrow \nabla_\theta \loss_k(\theta) ~~\forall k$
            \STATE $\bg_k^{\text{PC}} \leftarrow \bg_k ~~\forall k$
            \FOR{$\task_i \in \batch$ } %
                \FOR{$\task_j \stackrel{\text{\scriptsize uniformly}}{\sim} \batch \setminus \task_i$ \arxiv{in random order}}
                      \IF{$\bg_i^{\text{PC}} \cdot \bg_j < 0$}
                        \STATE \textit{// Subtract the projection of} $\bg_i^{\text{PC}}$ \textit{onto} $\bg_j$
                        \STATE Set $\bg_i^{\text{PC}} = \bg_i^{\text{PC}} - \frac{\bg_i^{\text{PC}} \cdot \bg_j}{\|\bg_j\|^2} \bg_j $ 
                      \ENDIF
                \ENDFOR
             \ENDFOR
            \STATE \textbf{return} update $\Delta \theta = \bg^\text{PC} = \sum_i \bg_i^{\text{PC}}$
            }
            \end{algorithmic}
            \end{algorithm}
            \vspace{-0.1cm}
      \end{minipage}
      \begin{minipage}[t]{0.495\linewidth}
          \begin{figure}[H]
              \includegraphics[width=0.99\linewidth]{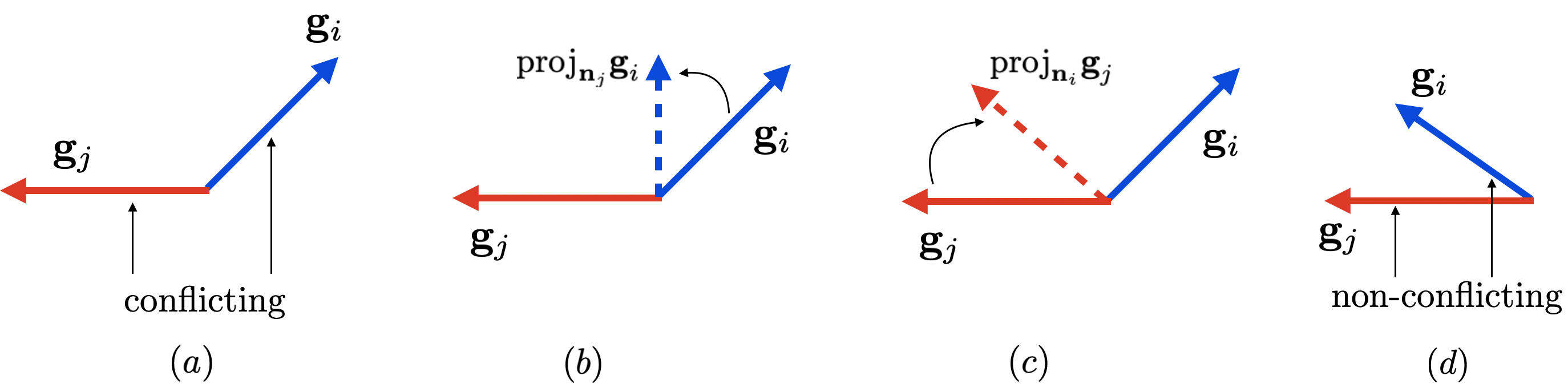}
              \vspace{-0.5cm}
              \captionof{figure}{\footnotesize Conflicting gradients and PCGrad. In (a), tasks $i$ and $j$ have conflicting gradient directions, which can lead to destructive interference. In (b) and (c), we illustrate the PCGrad algorithm in the case where gradients are conflicting. PCGrad projects task $i$'s gradient onto the normal vector of task $j$'s gradient, and vice versa. Non-conflicting task gradients (d) are not altered under PCGrad, allowing for constructive interaction.}
              \label{fig:conflict}
          \end{figure}
      \end{minipage}
\end{minipage}
Suppose the gradient for task $\task_i$ is $\bg_i$, and the gradient for task $\task_j$ is $\bg_j$. PCGrad proceeds as follows: (1) First, it determines whether $\bg_i$ conflicts with $\bg_j$ by computing the cosine similarity between vectors $\bg_i$ and $\bg_j$, where negative values indicate conflicting gradients. (2) If the cosine similarity is negative, we replace $\bg_i$ by its projection onto the normal plane of $\bg_j$: $\bg_i = \bg_i - \frac{\bg_i \cdot \bg_j}{\|\bg_j\|^2}\bg_j$. 
If the gradients are not in conflict, i.e. cosine similarity is non-negative, the original gradient $\bg_i$ remains unaltered. (3) PCGrad repeats this process across all of the other tasks sampled \arxiv{in random order from} the current batch $\task_j~ \forall~j\neq i$, resulting in the gradient $\bg_i^{\text{PC}}$ that is applied for task $\task_i$.
We perform the same procedure for all tasks in the batch to obtain their respective gradients. 
The full update procedure is described in Algorithm~\ref{alg:dgrad-o} \arxiv{and a discussion on using a random task order is included in Appendix~\ref{app:ablation_order}}.
%
%
%
%
%
%
%
%
%
%
%

\iffalse
, and can also be expressed as a single equation as follows:
\begin{align}
\Delta \theta = \sum_{\task_i\in\batch} \left[ \bg_i - \sum_{\task_j\in\batch}  \mathbbm{1}(\cos(\theta_{ij})<0) \frac{\bg_i\cdot\bg_j}{\|\bg_i\|^2}\bg_j    \right]\\
\text{where } ~~~ \bg_k = \nabla_\theta \loss_k(\theta), ~~~~~ \cos(\theta_{ij}) = \frac{\bg_i \cdot \bg_j}{\|\bg_j\|\|\bg_j\|} \nonumber
\end{align}
and where $\batch$ corresponds to a minibatch of tasks sampled from $p(\task)$ and where the PCGrad update $\Delta \theta$ can be plugged into SGD algorithms such as momentum and Adam without modification.
\fi

This procedure, while simple to implement, ensures that the gradients that we apply for each task per batch interfere minimally with the other tasks in the batch, mitigating the \icml{conflicting gradient} problem, producing a variant on standard first-order gradient descent in the multi-objective setting. In practice,  PCGrad can be combined with any gradient-based optimizer, including commonly used methods such as SGD with momentum and Adam~\citep{kingma2014adam}, by simply passing the computed update to the respective optimizer instead of the original gradient. Our experimental results verify the hypothesis that this procedure reduces the problem of \icml{conflicting gradients}, and find that, as a result, learning progress is substantially improved.

\subsection{Theoretical Analysis of PCGrad}
\label{sec:theory}

\icml{
In this section, we theoretically analyze the performance of PCGrad with two tasks:
\begin{definition}
\label{def:setup}
Consider two task loss functions $\loss_1 : \mathbb{R}^n \rightarrow \mathbb{R}$ and $\loss_2 : \mathbb{R}^n \rightarrow \mathbb{R}$. We define the two-task learning objective as $\loss(\theta) = \loss_1(\theta) + \loss_2(\theta)$ for all $\theta \in \mathbb{R}^n$, where $\rvgone = \nabla \loss_1(\theta)$, $\rvgtwo = \nabla \loss_2(\theta)$, and $\rvg = \rvgone + \rvgtwo$.
\end{definition}
}

We first aim to verify that the PCGrad update corresponds to a sensible optimization procedure under simplifying assumptions. We analyze convergence of PCGrad in the convex setting, under standard assumptions in Theorem~\ref{thm:converge}.
\neurips{For additional analysis on convergence, including the non-convex setting, with more than two tasks, and with momentum-based optimizers, see Appendices~\ref{app:proof} and \ref{app:momentum}}
\begin{restatable}{theorem}{theoremconvergence}
\label{thm:converge}
Assume $\loss_1$ and $\loss_2$ are convex and differentiable.
Suppose the gradient of $\loss$ is $L$-Lipschitz with $L > 0$.
Then, the PCGrad update rule with step size $t \leq \frac{1}{L}$ will converge to either (1) a location in the optimization landscape where $\cos(\phi_{12}) = -1$ or (2) the optimal value $\loss(\theta^*)$.
\end{restatable}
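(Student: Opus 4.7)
The plan is to carry out an $L$-smooth descent analysis tailored to the PCGrad update, then telescope and use convexity. Write $\bg := \bg_1 + \bg_2 = \nabla\loss(\theta)$ and $\bg^{\text{PC}} := \bg_1^{\text{PC}} + \bg_2^{\text{PC}}$, where $\bg_i^{\text{PC}} = \bg_i - \tfrac{\bg_i\cdot\bg_j}{\|\bg_j\|^2}\bg_j$ in the conflicting case ($\bg_1\cdot\bg_2<0$) and $\bg_i^{\text{PC}}=\bg_i$ otherwise. The three steps are: (i) compute $\bg\cdot\bg^{\text{PC}}$ and $\|\bg^{\text{PC}}\|^2$ in closed form, (ii) feed these into the descent lemma to obtain a per-step decrease proportional to $\sin^2\phi_{12}\,\|\bg\|^2$, and (iii) telescope and appeal to convexity to split the limit into the two stated cases.

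For step (i), the projection is defined precisely so that $\bg_1^{\text{PC}}\cdot\bg_2 = 0$ and $\bg_2^{\text{PC}}\cdot\bg_1 = 0$ when the gradients conflict. Using this orthogonality together with $\sin^2\phi_{12} = 1 - \tfrac{(\bg_1\cdot\bg_2)^2}{\|\bg_1\|^2\|\bg_2\|^2}$, direct expansion yields
\[
\bg\cdot\bg^{\text{PC}} = \bigl(\|\bg_1\|^2+\|\bg_2\|^2\bigr)\sin^2\phi_{12}, \qquad \|\bg^{\text{PC}}\|^2 = \bigl(\|\bg_1\|^2+\|\bg_2\|^2 - 2\bg_1\cdot\bg_2\bigr)\sin^2\phi_{12}.
\]
The $L$-smoothness descent lemma then gives $\loss(\theta - t\bg^{\text{PC}}) \leq \loss(\theta) - t\,\bg\cdot\bg^{\text{PC}} + \tfrac{Lt^2}{2}\|\bg^{\text{PC}}\|^2$, and substituting, together with $tL\leq 1$, collapses the bracket via
\[
\bigl(\|\bg_1\|^2+\|\bg_2\|^2\bigr) - \tfrac{tL}{2}\bigl(\|\bg_1\|^2+\|\bg_2\|^2 - 2\bg_1\cdot\bg_2\bigr) \;\geq\; \tfrac{1}{2}\|\bg\|^2,
\]
producing the uniform sufficient-decrease inequality $\loss(\theta_{k+1}) \leq \loss(\theta_k) - \tfrac{t}{2}\sin^2\phi_{12,k}\,\|\bg_k\|^2$, which also holds trivially in the non-conflicting regime where $\bg^{\text{PC}}=\bg$ since $\sin^2\phi_{12}\leq 1$.

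For step (iii), since $\{\loss(\theta_k)\}$ is non-increasing and bounded below by $\loss(\theta^*)$, it converges; telescoping the decrease inequality therefore gives $\sum_k \sin^2\phi_{12,k}\,\|\bg_k\|^2 < \infty$ and hence $\sin^2\phi_{12,k}\,\|\bg_k\|^2 \to 0$. I split on whether $\cos\phi_{12,k}\to -1$. If not, $\sin^2\phi_{12,k}$ stays bounded below along a subsequence, forcing $\|\bg_k\|\to 0$ along that subsequence; convexity together with a Fej\'er-type argument (expanding $\|\theta_{k+1}-\theta^*\|^2$ and using $\bg_i\cdot(\theta-\theta^*)\geq \loss_i(\theta)-\loss_i(\theta^*)$ along with the fact that the projection only removes the $\bg_j$-aligned component of $\bg_i$) then keeps the iterates bounded and delivers $\loss(\theta_k)\to\loss(\theta^*)$, which is conclusion (2). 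Otherwise $\cos\phi_{12,k}\to -1$, which is conclusion (1).

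The main obstacle I expect is in step (iii): the clean Fej\'er monotonicity that underlies the vanilla convex-GD proof does not transfer verbatim to $\bg^{\text{PC}}$, because replacing $\bg$ by $\bg^{\text{PC}}$ can shorten the component of the update along $\theta_k - \theta^*$. Either one must verify boundedness of the iterates directly from the structure of the projection, or impose a mild additional hypothesis (e.g., bounded sub-level sets of $\loss$). By contrast, step (i) is straightforward bookkeeping once the orthogonality of $\bg_i^{\text{PC}}$ and $\bg_j$ is noted, and step (ii) is clean once the identity $\|\bg_1\|^2 + \|\bg_2\|^2 + 2\bg_1\cdot\bg_2 = \|\bg\|^2$ is applied to the bracketed expression.
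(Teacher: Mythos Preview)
Your steps (i) and (ii) are essentially identical to the paper's argument: it uses the same $L$-smooth descent lemma and arrives at exactly the same per-step bound $\loss(\theta^+) \le \loss(\theta) - \tfrac{t}{2}(1-\cos^2\phi_{12})\|\bg\|^2$, though the paper expands $\bg\cdot\bg^{\text{PC}}$ and $\|\bg^{\text{PC}}\|^2$ term by term rather than via the orthogonality $\bg_i^{\text{PC}}\cdot\bg_j=0$ you exploit. Your step (iii), however, is considerably more careful than what the paper actually does: the paper simply observes that the decrease is strict whenever $\cos\phi_{12}>-1$ and $\bg\neq 0$ and then concludes in one sentence that iterating ``can either reach the optimal value $\loss(\theta^*)$ or $\cos(\phi_{12})=-1$,'' with no telescoping, no Fej\'er argument, and no boundedness discussion---so your worry about the Fej\'er step is legitimate, but it concerns a level of rigor the paper itself does not attempt.
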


\vspace{-0.3cm}
\begin{proof}
See Appendix~\ref{app:proof}.
\end{proof}
\vspace{-0.3cm}

Theorem~\ref{thm:converge} states that application of the PCGrad update in the two-task setting with a convex and Lipschitz multi-task loss function $\loss$ leads to convergence to either the minimizer of $\loss$ or a potentially sub-optimal objective value. A sub-optimal solution occurs when the cosine similarity between the gradients of the two tasks is exactly $-1$, i.e. the gradients directly conflict, leading to zero gradient after applying PCGrad. However, in practice, since we are using SGD, which is a noisy estimate of the true batch gradients, the cosine similarity between the gradients of two tasks in a minibatch is unlikely to be $-1$, thus avoiding this scenario. 
\neurips{Note that, in theory, convergence may be slow if $\cos(\phi_{12})$ hovers near $-1$. However, we don't observe this in practice, as seen in the objective-wise learning curves in Appendix~\ref{app:objective-wise}.}

Now that we have checked the sensibility of PCGrad, we aim to understand how PCGrad relates to the three conditions in the tragic triad. In particular, we derive sufficient conditions under which PCGrad achieves lower loss after one update. Here, we still analyze the two task setting, but no longer assume convexity of the loss functions.

\begin{definition}
\label{def:bounding}
\icml{We define the \textbf{multi-task curvature bounding measure}
$\xi(\rvgone, \rvgtwo) = (1 - \cos^2\phi_{12})\frac{\|\rvgone-\rvgtwo\|_2^2}{\|\rvgone + \rvgtwo\|_2^2}.$}
\end{definition}
With the above definition, we present our next theorem:
\begin{restatable}{theorem}{theoremlocal}
\label{thm:local}
Suppose $\loss$ is differentiable and the gradient of $\loss$ is Lipschitz continuous with constant $L > 0$. 
Let $\theta^{\text{MT}}$ and $\theta^{\text{PCGrad}}$ be the parameters after applying one update to $\theta$ with $\rvg$ and PCGrad-modified gradient $\rvg^{\text{PC}}$ respectively, with step size $t > 0$. 
Moreover, assume $\mathbf{H}(\loss; \theta, \theta^{\text{MT}}) \geq \ell\|\rvg\|_2^2$ for some constant $\ell \leq L$, i.e. the multi-task curvature is lower-bounded.
Then $\loss(\theta^{\text{PCGrad}}) \leq \loss(\theta^{\text{MT}})$ if \textbf{(a)} $\cos\phi_{12} \leq -\Phi(\rvgone, \rvgtwo)$, \textbf{(b)} $\ell \geq \xi(\rvgone, \rvgtwo)L$, and \textbf{(c)} $t \geq \frac{2}{\ell - \xi(\rvgone, \rvgtwo)L}.$
\end{restatable}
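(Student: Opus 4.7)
The plan is to sandwich $\loss(\theta^{\text{PCGrad}})$ and $\loss(\theta^{\text{MT}})$ between one-step bounds and compare them. Using $L$-smoothness of $\nabla\loss$ via the descent lemma along the direction $-t\rvg^{\text{PC}}$, I get $\loss(\theta^{\text{PCGrad}}) \leq \loss(\theta) - t\,\rvg^T\rvg^{\text{PC}} + \tfrac{Lt^2}{2}\|\rvg^{\text{PC}}\|_2^2$. For the other side, a second-order Taylor expansion of $\loss$ at $\theta$ in the direction $-t\rvg$ produces a quadratic remainder governed by the multi-task curvature $\mathbf{H}(\loss;\theta,\theta^{\text{MT}})$; combining with the hypothesis $\mathbf{H} \geq \ell\|\rvg\|_2^2$ yields $\loss(\theta^{\text{MT}}) \geq \loss(\theta) - t\|\rvg\|_2^2 + \tfrac{t^2\ell}{2}\|\rvg\|_2^2$. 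It then suffices to prove the scalar inequality between the two right-hand sides.

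The next step is to reduce $\rvg^T\rvg^{\text{PC}}$ and $\|\rvg^{\text{PC}}\|_2^2$ to closed form. Since $\rvgone^{\text{PC}}$ is, by construction, the orthogonal projection of $\rvgone$ onto the hyperplane perpendicular to $\rvgtwo$ (and symmetrically for the other task), four elementary identities drop out: $\rvgtwo\cdot\rvgone^{\text{PC}} = \rvgone\cdot\rvgtwo^{\text{PC}} = 0$ and $\rvgone\cdot\rvgone^{\text{PC}} = \|\rvgone^{\text{PC}}\|_2^2 = \|\rvgone\|_2^2\sin^2\phi_{12}$ (together with the analogous equalities for task 2). A short expansion of $\rvgone^{\text{PC}}\cdot\rvgtwo^{\text{PC}}$ then gives the clean formulas $\rvg^T\rvg^{\text{PC}} = \sin^2\phi_{12}\,(\|\rvgone\|_2^2 + \|\rvgtwo\|_2^2)$ and $\|\rvg^{\text{PC}}\|_2^2 = \sin^2\phi_{12}\,\|\rvgone - \rvgtwo\|_2^2$. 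Substituting into the inequality above and dividing by $t$, the goal collapses to
\[ \|\rvg\|_2^2 - \sin^2\phi_{12}(\|\rvgone\|_2^2 + \|\rvgtwo\|_2^2) \;\leq\; \tfrac{t}{2}\bigl[\ell\|\rvg\|_2^2 - L\sin^2\phi_{12}\|\rvgone-\rvgtwo\|_2^2\bigr]. \]

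To finish, I would invoke the three hypotheses in turn. Hypothesis (b), $\ell \geq \xi L$ with $\xi = \sin^2\phi_{12}\|\rvgone-\rvgtwo\|_2^2/\|\rvg\|_2^2$, is exactly the non-negativity of the bracket on the right; hypothesis (c), $t \geq 2/(\ell - \xi L)$, then multiplies this bracket up to at least $\|\rvg\|_2^2$, so the right-hand side is bounded below by $\|\rvg\|_2^2$. For the left-hand side, expanding $\|\rvg\|_2^2 = \|\rvgone\|_2^2 + \|\rvgtwo\|_2^2 + 2\rvgone\cdot\rvgtwo$ rewrites the bracket as $\cos\phi_{12}\bigl[(\|\rvgone\|_2^2+\|\rvgtwo\|_2^2)\cos\phi_{12} + 2\|\rvgone\|_2\|\rvgtwo\|_2\bigr]$, which under hypothesis (a), $\cos\phi_{12} \leq -\Phi(\rvgone,\rvgtwo)$, is a product of two non-positive factors and is in particular bounded above by $\|\rvg\|_2^2$. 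Chaining these yields the target inequality.

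The main obstacle I anticipate is the lower bound on $\loss(\theta^{\text{MT}})$: the standard Taylor remainder carries a $(1-a)$ weighting against the Hessian, whereas the paper's $\mathbf{H}$ is the unweighted average of $\rvg^T\nabla^2\loss(\theta+a(\theta'-\theta))\rvg$ along the segment, so the reduction from the stated hypothesis to a quadratic form of the type $\tfrac{\ell t^2}{2}\|\rvg\|_2^2$ needs either a pointwise interpretation of the high-curvature regime (consistent with the tragic-triad narrative) or an integration-by-parts argument. Beyond that, the algebra is essentially a clean collapse driven by the orthogonality of PCGrad's projections, and the role of each hypothesis corresponds one-to-one with the three pieces of the tragic triad.
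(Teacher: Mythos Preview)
Your approach is essentially identical to the paper's: an $L$-smoothness upper bound on $\loss(\theta^{\text{PCGrad}})$, a Taylor-plus-curvature lower bound on $\loss(\theta^{\text{MT}})$, and then a scalar comparison of the two right-hand sides. Your orthogonality identities $\rvg^T\rvg^{\text{PC}} = \sin^2\phi_{12}(\|\rvgone\|_2^2+\|\rvgtwo\|_2^2)$ and $\|\rvg^{\text{PC}}\|_2^2 = \sin^2\phi_{12}\|\rvgone-\rvgtwo\|_2^2$ are exactly what the paper derives by brute-force expansion (their equation labelled \texttt{pcgrad\_bound} in the proof of Theorem~1), and your reduced scalar inequality is the paper's equation~\eqref{eq:difference} after dividing by $t$.

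Two small remarks. First, your appeal to hypothesis~(a) is slightly off: showing the left-hand side is a product of two non-positive factors gives $\text{LHS}\ge 0$, not $\text{LHS}\le \|\rvg\|_2^2$. The bound you actually need, $\text{LHS}\le \|\rvg\|_2^2$, is immediate from $\text{LHS} = \|\rvg\|_2^2 - \sin^2\phi_{12}(\|\rvgone\|_2^2+\|\rvgtwo\|_2^2)$ without invoking~(a). The paper makes the same move---it records that~(a) yields $(\|\rvgone\|_2^2+\|\rvgtwo\|_2^2)\cos^2\phi_{12} + 2\|\rvgone\|_2\|\rvgtwo\|_2\cos\phi_{12} \ge 0$---but then its final chain $At_0 - B = (1-\cos^2\phi_{12})(\|\rvgone\|_2^2+\|\rvgtwo\|_2^2) \ge 0$ does not use that sign either; so this is a cosmetic slip shared with the original. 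Second, the concern you flag about the Taylor remainder is exactly right and is present in the paper too: the paper simply writes the second-order remainder as $\tfrac{1}{2}\int_0^1 (\theta'-\theta)^T\nabla^2\loss(\theta+a(\theta'-\theta))(\theta'-\theta)\,da$ (uniform weight, matching their Definition~\ref{def:curvature}) rather than the standard $(1-a)$-weighted form, so in effect the hypothesis $\mathbf{H}\ge \ell\|\rvg\|_2^2$ is being taken to deliver the needed quadratic lower bound directly.
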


\vspace{-0.3cm}
\begin{proof}
See Appendix~\ref{app:proof2}.
\end{proof}
\vspace{-0.3cm}

\icml{Intuitively, Theorem~\ref{thm:local} implies that PCGrad achieves lower loss value after a single gradient update compared to standard gradient descent in multi-task learning when (i) the angle between task gradients is not too small, i.e. the two tasks need to conflict sufficiently (\textbf{condition (a)}), (ii) the difference in magnitude needs to be sufficiently large (\textbf{condition (a)}), (iii) the curvature of the multi-task gradient should be large (\textbf{condition (b)}), (iv) and the learning rate should be big enough so that large curvature would lead to overestimation of performance improvement on the dominating task and underestimation of performance degradation on the dominated task (\textbf{condition (c)}). These first three points (i-iii) correspond to exactly the triad of conditions outlined in Section~\ref{sec:tragictriad}, while the latter condition (iv) is desirable as we hope to learn quickly. We empirically validate that the first three points, (i-iii), are frequently met in a neural network multi-task learning problem in Figure~\ref{fig:analysis} in Section~\ref{sec:analysis}.} \neurips{For additional analysis, including complete sufficient and necessary conditions for the PCGrad update to outperform the vanilla multi-task gradient, see Appendix~\ref{app:theorem3}.}

\section{PCGrad in Practice}
\label{sec:pcgrad_practical}

We use PCGrad in supervised learning and reinforcement learning problems with multiple tasks or goals. Here, we discuss the practical application of PCGrad to those settings. %

In multi-task supervised learning, each task $\task_i \sim p(\task)$ has a corresponding training dataset $\mathcal{D}_i$ consisting of labeled training examples, i.e. $\mathcal{D}_i = \{(x, y)_n\}$. 
The objective for each task in this supervised setting is then defined as $\loss_i(\theta) = \mathbb{E}_{(x,y) \sim \data_i} \left[ - \log f_\theta(y | x, z_i ) \right]$, where $z_i$ is a one-hot encoding of task $\task_i$. 
At each training step, we randomly sample a batch of data points $\mathcal{B}$ from the whole dataset $\bigcup_i \mathcal{D}_i$ and then group the sampled data with the same task encoding into small batches denoted as $\mathcal{B}_{i}$ for each $\task_i$ represented in $\mathcal{B}$. We denote the set of tasks appearing in $\mathcal{B}$ as $\mathcal{B}_\task$. After sampling, we precompute the gradient of each task in $\mathcal{B}_\task$ as
$
    \nabla_\theta \mathcal{L}_i(\theta) = \mathbb{E}_{(x,y) \sim \mathcal{B}_i} \left[ - \nabla_\theta\log f_\theta(y | x, z_i ) \right]\text{.}
$
Given the set of precomputed gradients $\nabla_\theta \mathcal{L}_i(\theta)$, we also precompute the cosine similarity between all pairs of the gradients in the set. Using the pre-computed gradients and their similarities, we can obtain the PCGrad  update by following Algorithm~\ref{alg:dgrad-o}, without re-computing task gradients nor backpropagating into the network.
Since the PCGrad procedure is only modifying the gradients of shared parameters in the optimization step, it is \textit{model-agnostic} and can be applied to any architecture with shared parameters.
\neurips{We empirically validate PCGrad with multiple architectures in Section~\ref{sec:experiments}}.

\icml{For multi-task RL and goal-conditioned RL, PCGrad can be readily applied to policy gradient methods by directly updating the computed policy gradient of each task, following Algorithm~\ref{alg:dgrad-o}, analogous to the supervised learning setting. For actor-critic algorithms, it is also straightforward to apply PCGrad: we simply replace task gradients for both the actor and the critic by their gradients computed via PCGrad.} \neurips{For more details on the practical implementation for RL, see Appendix~\ref{app:practical_rl}.}

\section{Related Work}

Algorithms for multi-task learning typically consider how to train a single model that can solve a variety of different tasks~\citep{caruana97multitask, bakker2003clustering, ruder2017overview}.
The multi-task formulation has been applied to many different settings, including supervised learning~\citep{zhang2014facial, long2015learning, yang2016trace, sener2018multi, zamir2018taskonomy} and reinforcement-learning~\citep{espeholt2018impala, wilson2007multi}, as well as many different domains, such as vision~\citep{bilen2016integrated, misra2016cross, kokkinos2017ubernet, liu2018attention, zamir2018taskonomy}, language~\citep{collobert2008unified, dong2015multi, mccann2018natural, radford2019language} and robotics~\citep{riedmiller2018learning, wulfmeier2019regularized, hausman2018learning}.
While multi-task learning has the promise of accelerating acquisition of large task repertoires, in practice it presents a challenging optimization problem, which has been tackled in several ways in prior work.

A number of architectural solutions have been proposed to the multi-task learning problem based on multiple modules or paths~\citep{fernando2017pathnet, devin2016modularnet, misra2016crossstitch, rusu2016progressive, rosenbaum2017routing, vandenhende2019branched, rosenbaum2017routing}, or using attention-based architectures~\citep{liu2018attention, maninis2019attention}. Our work is agnostic to the model architecture and can be combined with prior architectural approaches in a complementary fashion.
A different set of multi-task learning approaches aim to decompose the problem into multiple local problems, often corresponding to each task, that are significantly easier to learn, akin to divide and conquer algorithms~\citep{levine2016end,rusu2015policydistillation,parisotto2015actor,teh2017distral, ghosh2017dnc, czarneki2019distilling}. 
Eventually, the local models are combined into a single, multi-task policy using different distillation techniques (outlined in~\citep{hinton2015distilling,czarneki2019distilling}).
In contrast to these methods, we propose a simple and cogent scheme for multi-task learning that allows us to learn the tasks simultaneously using a single, shared model without the need for network distillation.

Similarly to our work, a number of prior approaches have observed the difficulty of optimization in multi-task learning~\citep{hessel2019popart, kendall2018multitask, schaul2019ray, suteu2019regularizing}. Our work suggests that the challenge in multi-task learning may be attributed to what we describe as the tragic triad of multi-task learning (i.e., conflicting gradients, high curvature, and large gradient differences), 
which we address directly by introducing a simple and practical algorithm that deconflicts gradients from different tasks. Prior works combat optimization challenges by rescaling task gradients ~\cite{sener2018multi,chen2018gradnorm}. We alter both the magnitude and direction of the gradient, which we find to be critical for good performance (see Fig.~\ref{fig:rl_results}). 
Prior work has also used the cosine similarity between gradients to define when an auxiliary task might be useful~\citep{du2018aux} or when two tasks are related~\cite{suteu2019regularizing}. We similarly use cosine similarity between gradients to determine if the gradients between a pair of tasks are in conflict. %
Unlike~\citet{du2018aux}, we use this measure for effective multi-task learning, instead of ignoring auxiliary objectives.
Overall, we empirically compare our approach to a number of these prior approaches~\cite{sener2018multi,chen2018gradnorm,suteu2019regularizing}, and observe superior performance with PCGrad.

\arxiv{Multiple approaches to continual learning have studied how to prevent gradient updates from adversely affecting previously-learned tasks through various forms of gradient projection\icmllast{~\citep{lopez2017gradient, agem, farajtabar2019orthogonal, guo2020improved}}. These methods focus on sequential learning settings, and solve for the gradient projections using quadratic programming~\citep{lopez2017gradient}, only project onto the normal plane of the average gradient of past tasks~\citep{agem}, or project the current task gradients onto the orthonormal set of previous task gradients~\citep{farajtabar2019orthogonal}. In contrast, our work focuses on positive transfer when simultaneously learning multiple tasks, does not require solving a QP, and \emph{iteratively} projects the gradients of each task instead of \emph{averaging} \icmllast{or only projecting the \emph{current} task gradient.}} Finally, our method is distinct from and solves a different problem than the projected gradient method~\citep{gradient_projection}, which is an approach for constrained optimization that projects gradients onto the constraint manifold.

\section{Experiments}
\label{sec:experiments}

The goal of our experiments is to study the following questions: (1) Does PCGrad make the optimization problems easier for various multi-task learning problems including supervised, reinforcement, and goal-conditioned reinforcement learning settings across different task families? (2) Can PCGrad be combined with other multi-task learning approaches to further improve performance? (3) \KH{Is the tragic triad of multi-task learning} a major factor in making optimization for multi-task learning challenging? To broadly evaluate PCGrad, we consider multi-task supervised learning, multi-task RL, and goal-conditioned RL problems. \neurips{We include the results on goal-conditioned RL in Appendix~\ref{app:goal-conditioned}.}

\neurips{
During our evaluation, we tune the parameters of the baselines independently, ensuring that all methods were fairly provided with equal model and training capacity. PCGrad inherits the hyperparameters of the respective baseline method in all experiments, and has no additional hyperparameters. For more details on the experimental set-up and model architectures, see Appendix~\ref{app:exp_details}.}
The code is available online\footnote{Code is released at \url{https://github.com/tianheyu927/PCGrad}}.

\subsection{Multi-Task Supervised Learning}

\neurips{To answer question (1) in the supervised learning setting and question (2), we perform experiments on five standard multi-task supervised learning datasets: MultiMNIST, CityScapes, CelebA, multi-task CIFAR-100 and NYUv2. We include the results on MultiMNIST and CityScapes in Appendix~\ref{app:additional_mt_sup_results}.}

\begin{table*}[t]
  \centering
  \scriptsize
  \def\arraystretch{0.9}
  \setlength{\tabcolsep}{0.42em}
    \begin{tabularx}{0.95\linewidth}{ccll*{9}{c}}
  \toprule
 \multicolumn{1}{c}{\multirow{3.5}[4]{*}{\#P.}} & \multicolumn{1}{c}{\multirow{3.5}[4]{*}{Architecture}} & \multicolumn{1}{c}{\multirow{3.5}[4]{*}{Weighting}} & \multicolumn{2}{c}{Segmentation} & \multicolumn{2}{c}{Depth}  & \multicolumn{5}{c}{Surface Normal}\\
  \cmidrule(lr){4-5} \cmidrule(lr){6-7} \cmidrule(lr){8-12}
   &\multicolumn{1}{c}{}  &\multicolumn{1}{c}{} & \multicolumn{2}{c}{\multirow{1.5}[2]{*}{(Higher Better)}} & \multicolumn{2}{c}{\multirow{1.5}[2]{*}{(Lower Better)}}   & \multicolumn{2}{c}{Angle Distance}  & \multicolumn{3}{c}{Within $t^\circ$} \\
    &\multicolumn{1}{c}{} & \multicolumn{1}{c}{} & \multicolumn{1}{c}{} & \multicolumn{1}{c}{}  & \multicolumn{1}{c}{}  & \multicolumn{1}{c}{} & \multicolumn{2}{c}{(Lower Better)} & \multicolumn{3}{c}{(Higher Better)} \\
     & \multicolumn{1}{c}{} & \multicolumn{1}{c}{} & \multicolumn{1}{c}{mIoU}  & \multicolumn{1}{c}{Pix Acc}  & \multicolumn{1}{c}{Abs Err} & \multicolumn{1}{c}{Rel Err} & \multicolumn{1}{c}{Mean}  & \multicolumn{1}{c}{Median}  & \multicolumn{1}{c}{11.25} & \multicolumn{1}{c}{22.5} & \multicolumn{1}{c}{30} \\
\midrule
  &    &Equal Weights   & 14.71  & 50.23   &  0.6481   &  0.2871 & 33.56 & 28.58 & 20.08 & 40.54 & 51.97\\
  $\approx$3 & Cross-Stitch$^\ddagger$    &Uncert. Weights$^*$  &  15.69 &  52.60  &  0.6277   & 0.2702 & 32.69 &  27.26 & 21.63 &  42.84 &  54.45 \\
  &     &DWA$^\dagger$, $T = 2$   & {\bf 16.11} &  {\bf  53.19} & {\bf 0.5922}   & {\bf 0.2611} & {\bf 32.34} & {\bf 26.91} & {\bf 21.81} & {\bf 43.14} & {\bf 54.92} \\
  \cmidrule(lr){2-13}
  &    & Equal Weights &   {\bf 17.72} &  55.32    & {\bf 0.5906}  &  0.2577 & 31.44  & {\bf 25.37}&  \mybox{\bf 23.17} & 45.65 &57.48\\
1.77 & MTAN$^\dagger$ & Uncert. Weights$^*$  &  17.67    &  {\bf 55.61}  &  0.5927    & 0.2592 & {\bf 31.25} & 25.57 & 22.99 & {\bf 45.83} & {\bf 57.67} \\    & &DWA$^\dagger$, $T = 2$   &  17.15    &  54.97 &  0.5956  & {\bf 0.2569} & 31.60 & 25.46 & 22.48 & 44.86 & 57.24 \\
      \cmidrule(lr){2-13}
1.77 & MTAN$^\dagger$+ PCGrad (ours) & Uncert. Weights$^*$  &  \mybox{\bf 20.17}    &  \mybox{\bf 56.65}  &  \mybox{\bf 0.5904}    & \mybox{\bf 0.2467} & \mybox{\bf 30.01} & \mybox{\bf 24.83} & 22.28 & \mybox{\bf 46.12} & \mybox{\bf 58.77} \\
    \bottomrule
    \end{tabularx}
    \vspace{-0.2cm}
         \caption{\footnotesize Three-task learning on the NYUv2 dataset: 13-class semantic segmentation, depth estimation, and surface normal prediction results. \#P shows the total number of network parameters. We highlight the best performing combination of multi-task architecture and weighting in bold. The top validation scores for each task are annotated with boxes. The symbols indicate prior methods: $^*$: \citep{kendall2017multi}, $^\dagger$: \citep{liu2018attention}, $^\ddagger$: \citep{misra2016crossstitch}. Performance of other methods as reported in \citet{liu2018attention}.
     \label{tab:mtl_results_compare}
     }
\end{table*}

\begin{wraptable}{R}{0.5\textwidth}
\vspace{-0.5cm}
    \begin{center}
    \begin{small}
    \begin{tabular}{l|c}
    \toprule
        & \% accuracy  \\
      \midrule
      task specific, 1-fc~\citep{rosenbaum2017routing} & 42\\
      task specific, all-fc~\citep{rosenbaum2017routing} & 49\\
      cross stitch, all-fc~\citep{misra2016crossstitch} & 53\\
      routing, all-fc + WPL~\citep{rosenbaum2019routing} & 74.7\\
      independent & 67.7\\
      \midrule
      PCGrad (ours) & 71\\
      routing-all-fc + WPL + PCGrad (ours) & \bf 77.5 \\
      \bottomrule
    \end{tabular}
    \end{small}
    \end{center}
    \vspace{-0.4cm}
    \caption{\footnotesize CIFAR-100 multi-task results. When combined with routing networks, PCGrad leads to a large improvement.}
    \label{tbl:cifar}
\end{wraptable}

For CIFAR-100, we follow~\citet{rosenbaum2017routing} to treat 20 coarse labels in the dataset as distinct tasks, creating a dataset with 20 tasks, with 2500 training instances and 500 test instances per task. We combine PCGrad with a powerful multi-task learning architecture, routing networks~\citep{rosenbaum2017routing, rosenbaum2019routing}, by applying PCGrad only to the shared parameters. \icmllast{For the details of this comparison, see Appendix~\ref{app:sl_details}.} As shown in Table~\ref{tbl:cifar}, \arxiv{applying PCGrad to a single network achieves 71\% classification accuracy, which outperforms most of the prior methods such as cross-stitch~\citep{misra2016crossstitch} and independent training, suggesting that sharing representations across tasks is conducive for good performance. While routing networks achieve better performance than PCGrad on its own, they are complementary: combining PCGrad with routing networks leads to a $2.8\%$ absolute improvement in  test accuracy.} %

We also aim to use PCGrad to tackle a multi-label classfication problem, which is a commonly used benchmark for multi-task learning. In multi-label classification, given a set of attributes, the model needs to decide whether each attribute describes the input. Hence, it is essentially a binary classification problem for each attribute. We choose the CelebA dataset~\cite{liu2015deep}, which consists of 200K face images with 40 attributes. Since for each attribution, it is a binary classfication problem and thus we convert it to a 40-way multi-task learning problem following \citep{sener2018multi}. We use the same architecture as in \cite{sener2018multi}.

We use the binary classification error averaged across all $40$ tasks to evaluate the performance as in~\cite{sener2018multi}. Similar to the MultiMNIST results, we compare PCGrad to \citet{sener2018multi} by rerunning the open-sourced code provided in~\cite{sener2018multi}. As shown in Table~\ref{tbl:celeba}, PCGrad outperforms~\citet{sener2018multi}, suggesting that PCGrad is effective in multi-label classification and can also improve multi-task supervised learning performance when the number of tasks is high.

\begin{table}[h]
    \begin{center}
    \begin{small}
    \begin{tabular}{l|c}
    \toprule
        & average classification error  \\
      \midrule
      \citet{sener2018multi} & 8.95\\
      \midrule
      PCGrad (ours) & \bf 8.69 \\
      \bottomrule
    \end{tabular}
    \end{small}
    \end{center}
    \caption{\footnotesize CelebA results. We show the average classification error across all 40 tasks in CelebA. PCGrad outperforms the prior method~\citet{sener2018multi} in this dataset.}
    \vspace{-0.4cm}
    \label{tbl:celeba}
\end{table}

Finally, we combine PCGrad with another state-of-art multi-task learning algorithm, MTAN~\citep{liu2018attention}, and evaluate the performance on a more challenging indoor scene dataset, NYUv2, which contains 3 tasks: \icml{13-class semantic segmentation, depth estimation, and surface normal prediction}. We compare MTAN with PCGrad to a list of methods mentioned in Appendix~\ref{app:sl_details}, where each method is trained with three different weighting schemes as in~\citep{liu2018attention}, equal weighting, weight uncertainty~\citep{kendall2017multi}, and DWA~\citep{liu2018attention}. We only run MTAN with PCGrad with weight uncertainty as we find weight uncertainty as the most effective scheme for training MTAN. The results comparing Cross-Stitch, MTAN and MTAN + PCGrad are presented in Table~\ref{tab:mtl_results_compare} while the full comparison can be found in Table~\ref{tab:nyu_results_full} in the Appendix~\ref{app:full_nyuv2_results}. MTAN with PCGrad is able to achieve the best scores in 8 out of the 9 categories where there are 3 categories per task.

Our multi-task supervised learning results indicate that PCGrad can be seamlessly combined with state-of-art multi-task learning architectures and further improve their results on established supervised multi-task learning benchmarks. We include more results of PCGrad combined with more multi-task learning architectures in Appendix~\ref{app:combined}.

\subsection{Multi-Task Reinforcement Learning}

\neurips{To answer question (2) in the RL setting}, \icml{we first consider the multi-task RL problem and evaluate our algorithm on the recently proposed Meta-World benchmark~\citep{metaworld}. In particular, we test all methods on the MT10 and MT50 benchmarks in Meta-World, which contain $10$ and $50$ manipulation tasks respectively shown in Figure~\ref{fig:metaworld}.} in Appendix~\ref{app:rl_details}.

\begin{figure*}[t]
    \centering
    \includegraphics[width=1.0\columnwidth]{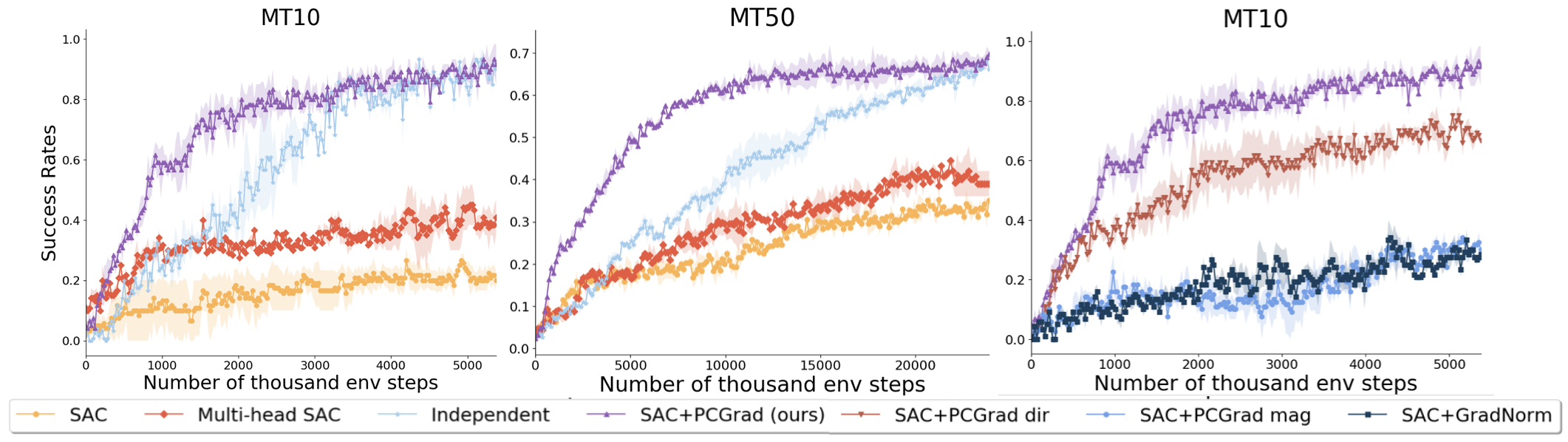}
    \vspace{-0.1cm}
    \caption{\footnotesize \neurips{For the two plots on the left, we show learning curves on MT10 and MT50 respectively. PCGrad significantly outperforms the other methods in terms of both success rates and data efficiency. In the rightmost plot, we present the ablation study on only using the magnitude and the direction of gradients modified by PCGrad and a comparison to GradNorm~\cite{chen2017gradnorm}. PCGrad outperforms both ablations and GradNorm, indicating the importance of modifying both the gradient directions and magnitudes in multi-task learning.}}
    \vspace{-0.2cm}
    \label{fig:rl_results}
\end{figure*}

The results are shown in left two plots in Figure~\ref{fig:rl_results}. PCGrad combined with SAC learns all tasks with the best data efficiency and successfully solves all of the $10$ tasks in MT10 and about $70$\% of the $50$ tasks in MT50. Training a single SAC policy and a multi-head policy is unable to acquire half of the skills in both MT10 and MT50, suggesting that eliminating gradient interference across tasks can significantly boost performance of multi-task RL. Training independent SAC agents is able to eventually solve all tasks in MT10 and $70$\% of the tasks in MT50, but requires about $2$ millions and $15$ millions more samples than PCGrad with SAC in MT10 and MT50 respectively, implying that applying PCGrad can result in leveraging shared structure among tasks that expedites multi-task learning.
As noted by~\citet{metaworld}, these tasks involve distinct behavior motions, which makes learning all tasks with a single policy challenging as demonstrated by poor baseline performance. The ability to learn these tasks together opens the door for a number of interesting extensions to meta-learning and generalization to novel task families. 

\begin{figure*}[t]
    \centering
    \includegraphics[width=\linewidth]{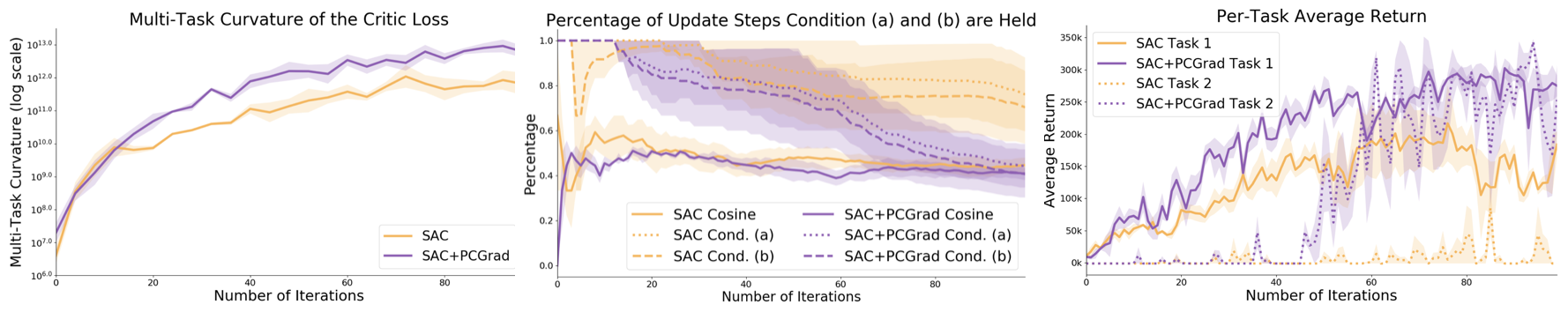}
    \caption{\footnotesize An empirical analysis of the theoretical conditions discussed in Theorem~\ref{thm:local}, showing the first $100$ iterations of training on two RL tasks, reach and press button top. \textbf{Left}: The estimated value of the multi-task curvature. We observe high multi-task curvatures exist throughout training, providing evidence for condition (b) in Theorem~\ref{thm:local}. \textbf{Middle}: The solid lines show the percentage gradients with positive cosine similarity between two task gradients, while the dotted lines and dashed lines show the percentage of iterations in which condition (a) and the implication of condition (b) ($\xi(\rvgone, \rvgtwo) \leq 1$) in Theorem~\ref{thm:local} are held respectively, among iterations when the cosine similarity is negative. \textbf{Right}: The average return of each task achieved by SAC and SAC combined with PCGrad. From the plots in the \textbf{Middle} and on the \textbf{Right}, we can tell that condition (a) holds most of the time for both Adam and Adam combined with PCGrad when they haven't solved Task 2 and as soon as Adam combined PCGrad starts to learn Task 2, the percentage of condition (a) held starts to decline. This observation suggests that condition (a) is a key factor for PCGrad excelling in multi-task learning.
    }
    \vspace{-0.3cm}
    \label{fig:analysis}
\end{figure*}

\neurips{Since the PCGrad update affects both the gradient direction and the gradient magnitude, we perform an ablation study that tests two variants of PCGrad: (1) only applying the gradient direction corrected with PCGrad while keeping the gradient magnitude unchanged and (2) only applying the gradient magnitude computed by PCGrad while keeping the gradient direction unchanged. We further run a direction comparison to GradNorm~\citep{chen2017gradnorm}, which also scales only the magnitudes of the task gradients.
As shown in the rightmost plot in Figure~\ref{fig:rl_results}, both variants and GradNorm perform worse than PCGrad and the variant where we only vary the gradient magnitude is much worse than PCGrad. This emphasizes the importance of the orientation change, which is particularly notable as multiple prior works only alter gradient magnitudes~\cite{chen2017gradnorm,sener2018multi}. We also notice that the variant of PCGrad where only the gradient magnitudes change achieves comparable performance to GradNorm, which suggests that it is important to modify both the gradient directions and magnitudes to eliminate interference and achieve good multi-task learning results. Finally, to test the importance of keeping positive cosine similarities between tasks for positive transfer, we compare PCGrad to a recently proposed method in~\cite{suteu2019regularizing} that regularizes cosine similarities of different task gradients towards $0$. PCGrad outperforms \citet{suteu2019regularizing} by a large margin. We leave details of the comparison to Appendix~\ref{app:cosine}}.

\subsection{Empirical Analysis of the Tragic Triad}
\label{sec:analysis}

Finally, to answer question (1), we \icmllast{compare the performance of standard multi-task SAC and the multi-task SAC with PCGrad. We evaluate each method on two tasks, reach and press button top, in the Meta-World~\cite{metaworld} benchmark.} As shown in the leftmost plot in Figure~\ref{fig:analysis}, we plot the multi-task curvature, which is computed as 
$
  \mathbf{H}(\mathcal{L};\theta^t\!,\! \theta^{t+1}\!)\! =\! 2\!\cdot\!\left[\mathcal{L}(\theta^{t+1})\!\!-\!\!\mathcal{L}(\theta^{t})\!\!-\!\! \nabla_{\theta^{t}}\mathcal{L}(\theta^{t})^T\!(\theta^{t+1}\!\!-\!\theta^{t})\right]
$
by Taylor's Theorem where $\mathcal{L}$ is the multi-task loss, and $\theta^{t}$ and $\theta^{t+1}$ are the parameters at iteration $t$ and $t+1$. During the training process, the multi-task curvature stays positive and is increasing for both Adam and Adam combined PCGrad, suggesting that condition (b) in Theorem~\ref{thm:local} that the multi-task curvature is lower bounded by some positive value is widely held empirically. 
\icml{To further analyze conditions in Theorem~\ref{thm:local} empirically, we plot the percentage of condition (a) (i.e. conflicting gradients) and the implication of condition (b) ($\xi(\rvgone, \rvgtwo) \leq 1$) in Theorem~\ref{thm:local} being held among the total number of iterations where the cosine similarity is negative in the plot in the middle of Figure~\ref{fig:analysis}. Along with the plot on the right in Figure~\ref{fig:analysis}, which presents the average return of the two tasks during training, we can see that while Adam and Adam with PCGrad haven't received reward signal from Task 2, condition (a) and the implication of condition (b) stay held and as soon as Adam with PCGrad begins to solve Task 2, the percentage of condition (a) and the implication of condition (b) being held start to decrease. Such a pattern suggests that conflicting gradients, high curvatures and dominating gradients indeed produce considerable challenges in optimization before multi-task learner gains any useful learning signal,  which also implies that the tragic triad may indeed be the determining factor where PCGrad can lead to better performance gain over standard multi-task learning in practice.}

\section{Conclusion}

In this work, we identified a set of conditions that underlies major challenges in multi-task optimization: conflicting gradients, high positive curvature, and large gradient differences. We proposed a simple algorithm (PCGrad) to mitigate these challenges via ``gradient surgery.'' PCGrad provides a simple solution to mitigating gradient interference, which substantially improves optimization performance. We provide simple didactic examples and subsequently show significant improvement in optimization for a variety of multi-task supervised learning and reinforcement learning problems. We show that, when some optimization challenges of multi-task learning are alleviated by PCGrad, we can obtain hypothesized benefits in efficiency and asymptotic performance of multi-task settings.

While we studied multi-task supervised learning and multi-task reinforcement learning in this work, we suspect the problem of conflicting gradients to be prevalent in a range of other settings and applications, such as meta-learning, continual learning, multi-goal imitation learning~\citep{codevilla2018end}, and multi-task problems in natural language processing applications~\citep{mccann2018natural}. Due to its simplicity and model-agnostic nature, we expect that applying PCGrad in these domains to be a promising avenue for future investigation. Further, the general idea of gradient surgery may be an important ingredient for alleviating a broader class of optimization challenges in deep learning, such as the challenges in the stability challenges in two-player games~\citep{roth2017stabilizing} and multi-agent optimizations~\citep{nedic2009distributed}. We believe this work to be a step towards simple yet general techniques for addressing some of these challenges.

\section*{Broader Impact}
\paragraph{Applications and Benefits.}
Despite recent success, current deep learning and deep RL methods mostly focus on tackling a single specific task from scratch. Prior methods have proposed methods that can perform multiple tasks, but they often yield comparable or even higher data complexity compared to learning each task individually. Our method enables deep learning systems that mitigate inferences between differing tasks and thus achieves data-efficient multi-task learning. Since our method is general and simple to apply to various problems, there are many possible real-world applications, including but not limited to computer vision systems, autonomous driving, and robotics. For computer vision systems, our method can be used to develop algorithms that enable efficient classification, instance and semantics segmentation and object detection at the same time, which could improve performances of computer vision systems by reusing features obtained from each task and lead to a leap in real-world domains such as autonomous driving. For robotics, there are many situations where multi-task learning is needed. For example, surgical robots are required to perform a wide range of tasks such as stitching and removing tumour from the patient’s body. Kitchen robots should be able to complete multiple chores such as cooking and washing dishes at the same time. Hence, our work represents a step towards making multi-task reinforcement learning more applicable to those settings.

\paragraph{Risks.}
However, there are potential risks that apply to all machine learning and reinforcement learning systems including ours, including but not limited to safety, reward specification in RL which is often difficult to acquire in the real world, bias in supervised learning systems due to the composition of training data, and compute/data-intensive training procedures. For example, safety issues arise when autonomous driving cars fail to generalize to out-of-distribution data, which leads to crashing or even hurting people. Moreover, reward specification in RL is generally inaccessible in the real world, making RL unable to scale to real robots. In supervised learning domains, learned models could inherit the bias that exists in the training dataset. Furthermore, training procedures of ML models are generally compute/data-intensive, which cause inequitable access to these models. Our method is not immune to these risks. Hence, we encourage future research to design more robust and safe multi-task RL algorithms that can prevent unsafe behaviors. It is also important to push research in self-supervised and unsupervised multi-task RL in order to resolve the issue of reward specification. For supervised learning, we recommend researchers to publish their trained multi-task learning models to make access to those models equitable to everyone in field and develop new datasets that can mitigate biases and also be readily used in multi-task learning.

\begin{ack}

The authors would like to thank Annie Xie for reviewing an earlier draft of the paper, Eric Mitchell for technical guidance, and Aravind Rajeswaran and Deirdre Quillen for helpful discussions. Tianhe Yu is partially supported by Intel Corporation. Saurabh Kumar is supported by an NSF Graduate Research Fellowship and the Stanford Knight Hennessy Fellowship. Abhishek Gupta is supported by an NSF Graduate Research Fellowship. Chelsea Finn is a CIFAR Fellow in the Learning in Machines and Brains program.

\end{ack}

\bibliography{reference}
\bibliographystyle{plainnat}

\newpage
{\Large \bf Appendix}
\appendix
\def\rvgone{{\mathbf{g_1}}}
\def\rvgtwo{{\mathbf{g_2}}}

\section{Proofs}
\subsection{Proof of Theorem 1}
\label{app:proof}
\theoremconvergence*

\begin{proof}

We will use the shorthand $|| \cdot ||$ to denote the $L_2$-norm and $\nabla \loss = \nabla_\theta \loss$, where $\theta$ is the parameter vector. Following Definition~\ref{def:angle} and \ref{def:setup}, let $\rvgone = \nabla \loss_1$, $\rvgtwo = \nabla \loss_2$, $\rvg = \nabla\loss = \rvgone + \rvgtwo$, and $\phi_{12}$ be the angle between $\rvgone$ and $\rvgtwo$.

At each PCGrad update, we have two cases: $cos(\phi_{12}) \geq 0$ or $\cos(\phi_{12} ) < 0$.

If $\cos(\phi_{12}) \geq 0$, then we apply the standard gradient descent update using $t \leq \frac{1}{L}$, which leads to a strict decrease in the objective function value $\loss(\theta)$ (since it is also convex) unless $\nabla \loss(\theta) = 0$, which occurs only when $\theta = \theta^*$~\citep{boyd2004convex}. 

In the case that $\cos(\phi_{12}) < 0$, we proceed as follows:

Our assumption that $\nabla \loss$ is Lipschitz continuous with constant $L$ implies that $\nabla^2 \loss(\theta) - LI$ is a negative semi-definite matrix. Using this fact, we can perform a quadratic expansion of $\loss$ around $\loss(\theta)$ and obtain the following inequality:
\begin{align*}
\loss(\theta^+) &\leq \loss(\theta) + \nabla \loss(\theta)^T (\theta^+ - \theta) + \frac{1}{2} \nabla^2 \loss(\theta) ||\theta^+ - \theta||^2 \\
&\leq \loss(\theta) + \nabla \loss(\theta)^T (\theta^+ - \theta) + \frac{1}{2} L ||\theta^+ - \theta||^2
\end{align*}
Now, we can plug in the PCGrad update by letting $\theta^+ = \theta - t\cdot(\rvg - \frac{\rvgone \cdot \rvgtwo}{||\rvgone||^2}\rvgone - \frac{\rvgone \cdot \rvgtwo}{||\rvgtwo||^2}\rvgtwo)$. We then get:
\begin{align}
    \loss(\theta^+) &\leq \loss(\theta) + t\cdot\rvg^T (-\rvg + \frac{\rvgone \cdot \rvgtwo}{||\rvgone||^2}\rvgone + \frac{\rvgone \cdot \rvgtwo}{||\rvgtwo||^2}\rvgtwo) 
    + \frac{1}{2}Lt^2||\rvg - \frac{\rvgone \cdot \rvgtwo}{||\rvgone||^2}\rvgone - \frac{\rvgone \cdot \rvgtwo}{||\rvgtwo||^2}\rvgtwo||^2\nonumber \\ \nonumber\\
    &\text{(Expanding, using the identity $\rvg = \rvgone + \rvgtwo$)}\nonumber
    \\ \nonumber\\
    &= \loss(\theta) + t\left(-||\rvgone||^2 - ||\rvgtwo||^2 + \frac{(\rvgone \cdot \rvgtwo)^2}{||\rvgone||^2} + \frac{(\rvgone \cdot \rvgtwo)^2}{||\rvgtwo||^2}\right)
    + \frac{1}{2}Lt^2||\rvgone + \rvgtwo \nonumber\\ 
    &- \frac{\rvgone \cdot \rvgtwo}{||\rvgone||^2}\rvgone - \frac{\rvgone \cdot \rvgtwo}{||\rvgtwo||^2}\rvgtwo||^2\nonumber
    \\ \nonumber\\
    &\text{(Expanding further and re-arranging terms)}\nonumber
    \\ \nonumber\\
    &= \loss(\theta) - (t - \frac{1}{2}Lt^2)(||\rvgone||^2 + ||\rvgtwo||^2 - \frac{(\rvgone \cdot \rvgtwo)^2}{||\rvgone||^2} - \frac{(\rvgone \cdot \rvgtwo)^2}{||\rvgtwo||^2})\nonumber\\
    &- Lt^2(\rvgone \cdot \rvgtwo - \frac{(\rvgone \cdot \rvgtwo)^2}{||\rvgone||^2 ||\rvgtwo||^2}\rvgone \cdot \rvgtwo) \nonumber
    \\ \nonumber\\
    &\text{(Using the identity $\cos(\phi_{12}) = \frac{\rvgone \cdot \rvgtwo}{||\rvgone|| ||\rvgtwo||}$)}\nonumber
    \\ \nonumber\\
    &= \loss(\theta) - (t  - \frac{1}{2}Lt^2) [(1 - \cos^2(\phi_{12})) ||\rvgone||^2 + (1 - \cos^2(\phi_{12})) ||\rvgtwo||^2 ]\nonumber \\
    &- Lt^2 (1 - \cos^2(\phi_{12})) ||\rvgone|| ||\rvgtwo|| \cos(\phi_{12}) \label{eq:pcgrad_bound}\\
    &\text{(Note that $\cos(\phi_{12}) < 0$ so the final term is non-negative)}\nonumber
\end{align}
Using $t \leq \frac{1}{L}$, we know that $-(1 - \frac{1}{2} Lt) = \frac{1}{2}Lt - 1 \leq \frac{1}{2}L(1/L) - 1 = \frac{-1}{2}$ and $Lt^2 \leq t$. 

Plugging this into the last expression above, we can conclude the following:
\begin{align*}
    \loss(\theta^+) &\leq \loss(\theta) - \frac{1}{2}t [(1 - \cos^2(\phi_{12})) ||\rvgone||^2 + (1 - \cos^2(\phi_{12})) ||\rvgtwo||^2 ]\\
    &- t (1 - \cos^2(\phi_{12})) ||\rvgone|| ||\rvgtwo|| \cos(\phi_{12}) \\
    &= \loss(\theta) - \frac{1}{2}t (1 - \cos^2(\phi_{12})) [ ||\rvgone||^2 + 
    2 ||\rvgone|| ||\rvgtwo|| \cos(\phi_{12}) +
    ||\rvgtwo||^2 ] \\
    &= \loss(\theta) - \frac{1}{2}t (1 - \cos^2(\phi_{12})) [ ||\rvgone||^2 + 
    2 \rvgone \cdot \rvgtwo +
    ||\rvgtwo||^2 ] \\
    &= \loss(\theta) - \frac{1}{2}t (1 - \cos^2(\phi_{12})) ||\rvgone + \rvgtwo||^2 \\
    &= \loss(\theta) - \frac{1}{2}t (1 - \cos^2(\phi_{12})) \|\rvg\|^2
\end{align*}

If $\cos(\phi_{12}) > -1$, then $\frac{1}{2}t (1 - \cos^2(\phi_{12})) \|\rvg\|^2$ will always be positive unless $\rvg = 0$. This inequality implies that the objective function
value strictly decreases with each iteration where $\cos(\phi_{12}) > -1$.

Hence repeatedly applying PCGrad process can either reach the optimal value $\loss(\theta) = \loss(\theta^*)$ or $\cos(\phi_{12}) = -1$, in which case $\frac{1}{2}t (1 - \cos^2(\phi_{12})) \|\rvg\|^2 = 0$. Note that this result only holds when we choose $t$ to be small enough, i.e. $t \leq \frac{1}{L}$.

\end{proof}

\begin{corollary}
\label{cor:moretasks}
Assume the $n$ objectives $\loss_1, \loss_2, ..., \loss_n$ are convex and differentiable. Suppose the gradient of $\loss$ is Lipschitz continuous with constant $L > 0$. Assume that $\cos(\rvg, \rvg^\text{PC}) \geq \frac{1}{2}$. 
Then, the PCGrad update rule with step size $t \leq \frac{1}{L}$ will converge to either (1) a location in the optimization landscape where $\cos(\rvg_i, \rvg_j) = -1 \forall i, j$ or (2) the optimal value $\loss(\theta^*)$. 
\end{corollary}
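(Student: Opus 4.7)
}
My plan is to follow the template of the proof of Theorem~\ref{thm:converge} very closely, but to replace the explicit two-task trigonometric bookkeeping by the single scalar hypothesis $\cos(\rvg,\rvg^{\text{PC}}) \geq \tfrac{1}{2}$, which plays the role of ``$\rvg^{\text{PC}}$ is a sufficiently aligned descent direction.'' Concretely, since $\nabla\loss$ is $L$-Lipschitz, the standard quadratic expansion at $\theta^+ = \theta - t\,\rvg^{\text{PC}}$ gives
\[
\loss(\theta^+) \;\leq\; \loss(\theta) \;-\; t\,\rvg^T \rvg^{\text{PC}} \;+\; \tfrac{L t^2}{2}\,\|\rvg^{\text{PC}}\|^2 .
\]
The hypothesis immediately furnishes $\rvg^T\rvg^{\text{PC}} \geq \tfrac{1}{2}\|\rvg\|\|\rvg^{\text{PC}}\|$, which controls the linear term.

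Next I would split into two cases, just as Theorem~\ref{thm:converge} does. In the first case, no pair of task gradients conflicts, so every per-task projection in Algorithm~\ref{alg:dgrad-o} is a no-op, giving $\rvg^{\text{PC}} = \rvg$ and reducing the step to standard projected gradient descent on a convex, $L$-smooth $\loss$ with $t \leq 1/L$; this yields strict decrease unless $\rvg = 0$, which by convexity means $\theta = \theta^*$. In the second case at least one pair conflicts; here I would use the cosine hypothesis together with the observation that each individual PCGrad projection is norm non-expansive (a direct computation gives $\|\rvg_i^{\text{PC}}\|^2 = \|\rvg_i\|^2 - (\rvg_i\cdot\rvg_j)^2/\|\rvg_j\|^2 \leq \|\rvg_i\|^2$) to argue that the quadratic penalty $\tfrac{L t^2}{2}\|\rvg^{\text{PC}}\|^2$ is dominated by the linear gain $t\,\rvg^T\rvg^{\text{PC}}$ for $t \leq 1/L$, leaving a strict decrease of at least $c\|\rvg\|^2$ for some $c>0$.

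Iterating this one-step decrease gives a monotonically decreasing sequence $\loss(\theta^{(k)})$ bounded below by $\loss(\theta^*)$. Convexity and the strict-descent inequality then yield two mutually exclusive outcomes: either the iterates converge to $\theta^*$ (alternative (2)), or the procedure reaches a fixed point where $\rvg^{\text{PC}} = 0$ while $\rvg \neq 0$, which can only happen when the individual projections completely cancel their respective tasks. As in the two-task case this corresponds precisely to the degenerate geometry $\cos(\rvg_i,\rvg_j) = -1$ (alternative (1)).

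The main obstacle, and the step I would spend the most care on, is the quadratic term $\tfrac{L t^2}{2}\|\rvg^{\text{PC}}\|^2$: the assumption $\cos(\rvg,\rvg^{\text{PC}})\geq \tfrac{1}{2}$ bounds the angle but not directly the magnitude of $\rvg^{\text{PC}}$ relative to $\|\rvg\|$, and pointwise non-expansiveness of each projection does not immediately give $\|\rvg^{\text{PC}}\| \leq \|\rvg\|$ after summation. To close this gap I would either (i) use the cosine hypothesis together with $(\rvg^T\rvg^{\text{PC}})/\|\rvg^{\text{PC}}\|^2$ as an effective step-size adjustment inside the Lipschitz bound, so that the descent becomes $\loss(\theta^+) \leq \loss(\theta) - \tfrac{\cos^2(\rvg,\rvg^{\text{PC}})}{2L}\|\rvg\|^2 \leq \loss(\theta) - \tfrac{1}{8L}\|\rvg\|^2$, or (ii) induct over the pairwise projection order in Algorithm~\ref{alg:dgrad-o}, applying the two-task calculation from Theorem~\ref{thm:converge} to each inner iteration so that the composite update inherits a per-pair strict-decrease structure. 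Either route reduces the $n$-task case to the same ``strict decrease or collinear opposing gradients'' dichotomy established for two tasks.
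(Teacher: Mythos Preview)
Your outline is essentially the paper's argument: the paper also starts from the $L$-smooth descent lemma, plugs in $\theta^+=\theta - t\,\rvg^{\text{PC}}$, and uses $\cos(\rvg,\rvg^{\text{PC}})\geq\tfrac12$ to get $\rvg^T\rvg^{\text{PC}}\geq\tfrac12\|\rvg\|\,\|\rvg^{\text{PC}}\|$, concluding strict decrease for $t\le 1/L$ unless $\|\rvg\|=0$ or $\|\rvg^{\text{PC}}\|=0$.

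Two differences are worth flagging. First, the paper does not perform your ``no pair conflicts / some pair conflicts'' case split; it applies the bound uniformly from the cosine hypothesis alone, so your split is unnecessary. Second, the step you identify as the main obstacle is exactly where the paper is terser than you: it passes from $\tfrac{Lt^2}{2}\|\rvg^{\text{PC}}\|^2$ to $\tfrac{Lt^2}{2}\|\rvg^{\text{PC}}\|\,\|\rvg\|$ in one line, i.e.\ it tacitly uses $\|\rvg^{\text{PC}}\|\le\|\rvg\|$ without further argument. Your routes (i) and (ii) therefore go beyond what the paper supplies; note, however, that your route (i) yields the $\tfrac{\cos^2}{2L}\|\rvg\|^2$ decrease only at the \emph{optimal} step $t=\rvg^T\rvg^{\text{PC}}/(L\|\rvg^{\text{PC}}\|^2)$, not at an arbitrary $t\le 1/L$, so it would alter the statement. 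Finally, the identification of the fixed point $\|\rvg^{\text{PC}}\|=0$ with $\cos(\rvg_i,\rvg_j)=-1$ for all $i,j$ is, in the paper as in your plan, asserted rather than derived.
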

\begin{proof}
Our assumption that $\nabla \loss$ is Lipschitz continuous with constant $L$ implies that $\nabla^2 \loss(\theta) - LI$ is a negative semi-definite matrix. Using this fact, we can perform a quadratic expansion of $\loss$ around $\loss(\theta)$ and obtain the following inequality:
\begin{align*}
\loss(\theta^+) &\leq \loss(\theta) + \nabla \loss(\theta)^T (\theta^+ - \theta) + \frac{1}{2} \nabla^2 \loss(\theta) ||\theta^+ - \theta||^2 \\
&\leq \loss(\theta) + \nabla \loss(\theta)^T (\theta^+ - \theta) + \frac{1}{2} L ||\theta^+ - \theta||^2
\end{align*}
Now, we can plug in the PCGrad update by letting $\theta^+ = \theta - t\cdot \rvg^\text{PC}$. We then get:
\begin{align*}
    \loss(\theta^+) &\leq \loss(\theta) - t\cdot\rvg^T \rvg^\text{PC}
    + \frac{1}{2}Lt^2||\rvg^\text{PC}||^2\nonumber \\
    &\text{(Using the assumption that $\cos(\rvg, \rvg^\text{PC}) \geq \frac{1}{2}$.)} \\
    &\leq \loss(\theta) - \frac{1}{2} t||\rvg||\cdot||\rvg^\text{PC}||
    + \frac{1}{2}Lt^2||\rvg^\text{PC}||^2\nonumber \\
    &\leq \loss(\theta) - \frac{1}{2} t||\rvg||\cdot||\rvg^\text{PC}||
    + \frac{1}{2}Lt^2||\rvg^\text{PC}|| \cdot ||\rvg|| \nonumber \nonumber \\
\end{align*}
Note that $-\frac{1}{2} t ||\rvg||\cdot||\rvg^\text{PC}||
    + \frac{1}{2}Lt^2||\rvg^\text{PC}|| \cdot ||\rvg|| \leq 0$ when $t \leq \frac{1}{L}$. Further, when $t < \frac{1}{L}$, $-\frac{1}{2} t ||\rvg||\cdot||\rvg^\text{PC}||
    + \frac{1}{2}Lt^2||\rvg^\text{PC}|| \cdot ||\rvg|| = 0$ if and only if $||\rvg|| = 0$ or $||\rvg^\text{PC}||=0$. 
    
Hence repeatedly applying PCGrad process can either reach the optimal value $\loss(\theta) = \loss(\theta^*)$ or a location in the optimization landscape where $\cos(\rvg_i, \rvg_j) = -1$ for all pairs of tasks $i, j$. Note that this result only holds when we choose $t$ to be small enough, i.e. $t \leq \frac{1}{L}$.
\end{proof}

\begin{proposition}
\label{prop:nonconvex}
Assume $\loss_1$ and $\loss_2$ are differentiable but possibly non-convex. Suppose the gradient of $\loss$ is Lipschitz continuous with constant $L > 0$.
Then, the PCGrad update rule with step size $t \leq \frac{1}{L}$ will converge to either (1) a location in the optimization landscape where $\cos(\phi_{12}) = -1$ or (2) find a $\theta_k$ that is almost a stationary point.
\end{proposition}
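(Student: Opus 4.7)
The plan is to recycle the descent calculation from the proof of Theorem~\ref{thm:converge}, while observing that convexity was never actually used to derive its key per-step inequality. First, I would note that in the conflicting case $\cos\phi_{12} < 0$, the derivation culminating in
\begin{equation*}
\loss(\theta^+) \leq \loss(\theta) - \tfrac{1}{2} t \bigl(1 - \cos^2 \phi_{12}\bigr) \|\rvg\|^2
\end{equation*}
relied only on the quadratic upper bound coming from $L$-smoothness (the descent lemma), and not at all on convexity of $\loss_1$ or $\loss_2$. So that inequality transfers verbatim to the non-convex setting. In the non-conflicting case $\cos\phi_{12} \geq 0$, PCGrad reduces to vanilla gradient descent, and the descent lemma alone gives the stronger bound $\loss(\theta^+) \leq \loss(\theta) - \tfrac{t}{2}\|\rvg\|^2$, which subsumes the previous display.

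Next, assuming $\loss$ is bounded below by some $\loss^\star > -\infty$, I would telescope the unified bound over iterates $\theta_0,\theta_1,\dots,\theta_{K-1}$ produced by PCGrad with step $t \leq 1/L$ to obtain
\begin{equation*}
\sum_{k=0}^{K-1} \bigl(1 - \cos^2 \phi_{12}^{(k)}\bigr)\|\rvg^{(k)}\|^2 \;\leq\; \frac{2\bigl(\loss(\theta_0) - \loss^\star\bigr)}{t}.
\end{equation*}
The right-hand side is a finite constant independent of $K$, so the summand must tend to $0$ along a subsequence, and in particular the best-iterate value is $O(1/K)$. Hence for any tolerance $\varepsilon > 0$ there exists an iterate $\theta_k$ at which either $\|\rvg^{(k)}\| \leq \varepsilon$ (approximate stationarity, conclusion~(2)) or $\cos^2\phi_{12}^{(k)} \geq 1 - \varepsilon$.

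The main obstacle is making the dichotomy clean, since $1-\cos^2\phi_{12}$ vanishes at both $\pm 1$ and so the product tending to zero does not by itself isolate conclusion~(1). To rule out $\cos\phi_{12}^{(k)} \to +1$ as a spurious sink, I would use that whenever $\cos\phi_{12}^{(k)} \geq 0$ the PCGrad step is ordinary gradient descent, for which the descent lemma with $t \leq 1/L$ gives the stronger per-step decrease $\tfrac{t}{2}\|\rvg^{(k)}\|^2$; telescoping over just those iterations forces $\min_{k}\|\rvg^{(k)}\| \to 0$ in the non-conflicting regime. Consequently the only way to avoid conclusion~(2) is for all but finitely many iterates to lie in the conflicting regime with $\cos\phi_{12}^{(k)} \to -1$, which is conclusion~(1). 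A final cosmetic step is to interpret ``almost a stationary point'' in the standard non-convex sense as $\liminf_k \|\rvg^{(k)}\| = 0$, or equivalently the $O(1/\sqrt{K})$ best-iterate bound that falls out of the telescoped sum.
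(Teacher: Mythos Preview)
Your proposal is correct and follows essentially the same line as the paper's own proof: both recycle the per-step descent inequality from Theorem~\ref{thm:converge} (which indeed used only $L$-smoothness, not convexity), telescope, and read off the dichotomy between approximate stationarity and $\cos\phi_{12}\to -1$.

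The packaging differs slightly. The paper assumes a uniform bound $\cos\phi_{12,k}\ge\alpha>-1$ for all $k$ and derives the explicit best-iterate rate $\min_k\|\rvg_k\|^2\le \tfrac{2(\loss(\theta_0)-\loss^\star)}{K(1-\alpha^2)t}$, so the rate constant depends on how close to $-1$ the cosine ever gets. You instead telescope the weighted sum $\sum_k(1-\cos^2\phi_{12}^{(k)})\|\rvg^{(k)}\|^2$ directly and argue by subsequence. Your version is a bit more careful about the non-conflicting regime: you explicitly rule out the spurious sink $\cos\phi_{12}\to +1$ by invoking the stronger plain-GD descent bound there, whereas the paper's final step implicitly assumes all iterations are conflicting when it bounds $1-\cos^2\phi_{12,k}\ge 1-\alpha^2$. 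Both arguments land in the same place; the paper buys an explicit rate in terms of $\alpha$, while yours buys a cleaner dichotomy without that extra hypothesis.
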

\begin{proof}
Following Definition~\ref{def:angle} and \ref{def:setup}, let $\rvgone_k = \nabla \loss_1$ at iteration $k$, $\rvgtwo_k = \nabla \loss_2$ at iteration $k$, and $\rvg_k = \nabla\loss = \rvgone_k + \rvgtwo_k$ at iteration $k$, and $\phi_{12, k}$ be the angle between $\rvgone_k$ and $\rvgtwo_k$.

From the proof of Theorem 1, when $\cos(\phi_{12},k) < 0$ we have:
$$
||\rvg_k||^2 \leq \frac{2}{t} \frac{\loss(\theta_{k-1}) - \loss(\theta_{k})}{(1 - \cos^2(\phi_{12, k}))}.
$$

Thus, we have:
\begin{align*}
\min_{0 \leq k \leq K} || \rvg_k ||^2 &\leq \frac{1}{K} \sum_{i = 0}^{K-1} ||\rvg_i||^2 \\
&\leq \frac{2}{Kt} \sum_{i = 0}^{K-1} \frac{\loss(\theta_{i-1}) - \loss(\theta_{i})}{(1 - \cos^2(\phi_{12, i}))}
\end{align*}

If at any iteration, $\cos(\phi_{12, k}) = -1$, then the optimization will stop at that point. If $\forall k \in [0, K]$, $\cos(\phi_{12, k}) \geq \alpha > -1$, then, we have:

\begin{align*}
\min_{0 \leq k \leq K} || \rvg_k ||^2 &\leq \frac{2}{K (1 - \alpha^2) t} \sum_{i = 0}^{K-1} (\loss(\theta_{i-1}) - \loss(\theta_{i})) \\
&= \frac{2}{K (1 - \alpha^2) t} (\loss(\theta_{0}) - \loss(\theta_{K})) \\
&\leq \frac{2}{K (1 - \alpha^2) t} (\loss(\theta_{0}) - \loss^*).
\end{align*}
where $\loss^*$ is the minimal function value.
\end{proof}

Note that the convergence rate of PCGrad in the non-convex setting largely depends on the value of $\alpha$ and generally how small $\cos(\phi_{12, k})$ is on average.

\subsection{Proof of Theorem 2}
\label{app:proof2}

\theoremlocal*

\begin{proof}

Note that $\theta^{\text{MT}} = \theta - t\cdot\rvg$ and $\theta^{\text{PCGrad}} = \theta - t(\rvg - \frac{\rvgone \cdot \rvgtwo}{||\rvgone||^2}\rvgone - \frac{\rvgone \cdot \rvgtwo}{||\rvgtwo||^2}\rvgtwo)$. Based on the condition that $\mathbf{H}(\loss; \theta, \theta^{\text{MT}}) \geq \ell\|\rvg\|_2^2$, we first apply Taylor's Theorem to $\loss(\theta^{\text{MT}})$ and obtain the following result:
\begin{align}
    \loss(\theta^{\text{MT}}) &= \loss(\theta) + \rvg^T(-t\rvg) + \int_0^1 (-t\rvg)^T\frac{\nabla^2\loss(\theta+a\cdot(-t\rvg))}{2}(-t\rvg)da \nonumber\\
    &\geq \loss(\theta) + \rvg^T(-t\rvg) + t^2\cdot \frac{1}{2}\ell\cdot\|\rvg\|_2^2 \nonumber\\
    &= \loss(\theta) - t\|\rvg\|_2^2 + \frac{1}{2}\ell t^2\|\rvg\|_2^2 \nonumber\\
    &= \loss(\theta) + (\frac{1}{2}\ell t^2 - t)\|\rvg\|_2^2 \label{eq:mt_bound}
\end{align}
where the first inequality follows from Definition~\ref{def:curvature} and the assumption $\mathbf{H}(\loss; \theta, \theta^{\text{MT}}) \geq \ell\|\rvg\|_2^2$. From equation~\ref{eq:pcgrad_bound}, we have the simplified upper bound for $\loss(\theta^{\text{PCGrad}})$:
\begin{align}
    \loss(\theta^{\text{PCGrad}}) &\leq \loss(\theta) - (1-\cos^2\phi_{12})[(t-\frac{1}{2}Lt^2)\cdot(\|\rvgone\|_2^2+\|\rvgone\|_2^2) + Lt^2\|\rvgone\|_2\|\rvgtwo\|_2\cos\phi_{12}]\label{eq:pcgrad_bound_simplified}
\end{align}
Apply Equation~\ref{eq:mt_bound} and Equation~\ref{eq:pcgrad_bound_simplified} and we have the following inequality:
\begin{align}
    &\loss(\theta^{\text{MT}}) - \loss(\theta^{\text{PCGrad}}) \geq \loss(\theta) + (\frac{1}{2}\ell t^2 - t)\|\rvg\|_2^2 - \loss(\theta)\nonumber\\
    & + (1-\cos^2\phi_{12})[(t-\frac{1}{2}Lt^2)(\|\rvgone\|_2^2+\|\rvgtwo\|_2^2) + Lt^2\|\rvgone\|_2\|\rvgtwo\|_2\cos\phi_{12}] \nonumber\\
    &= (\frac{1}{2}\ell t^2 - t)\|\rvgone + \rvgtwo\|_2^2\!\!+\!\! (1-\cos^2\phi_{12})\!\!\left[(t-\frac{1}{2}Lt^2)\!\cdot\!(\|\rvgone\|_2^2\!+\!\|\rvgtwo\|_2^2)\!+\! Lt^2\|\rvgone\|_2\|\rvgtwo\|_2\cos\phi_{12}\right]\nonumber\\
    &= \left(\frac{1}{2}\|\rvgone+\rvgtwo\|_2^2\ell - \frac{1-\cos^2\phi_{12}}{2}(\|\rvgone\|_2^2+\|\rvgtwo\|_2^2 - 2\|\rvgone\|_2\|\rvgtwo\|_2\cos\phi_{12})L\right)t^2\nonumber\\
    &- \left((\|\rvgone\|_2^2+\|\rvgtwo\|_2^2)\cos^2\phi_{12} + 2\|\rvgone\|_2\|\rvgtwo\|_2\cos\phi_{12}\right)t\nonumber\\
    &= \left(\frac{1}{2}\|\rvgone+\rvgtwo\|_2^2\ell - \frac{1-\cos^2\phi_{12}}{2}\|\rvgone-\rvgtwo\|_2^2L\right)t^2\nonumber\\
    &- \left((\|\rvgone\|_2^2+\|\rvgtwo\|_2^2)\cos^2\phi_{12} + 2\|\rvgone\|_2\|\rvgtwo\|_2\cos\phi_{12}\right)t\nonumber\\
    &= t\cdot\left[\left(\frac{1}{2}\|\rvgone+\rvgtwo\|_2^2\ell - \frac{1-\cos^2\phi_{12}}{2}(\|\rvgone-\rvgtwo\|_2^2)L\right)t\right.\nonumber\\
    &\left.- \left((\|\rvgone\|_2^2+\|\rvgtwo\|_2^2)\cos^2\phi_{12} + 2\|\rvgone\|_2\|\rvgtwo\|_2\cos\phi_{12}\right)\right]\label{eq:difference}
\end{align}
Since $\cos\phi_{12} \leq -\Phi(\rvgone, \rvgtwo) =  -\frac{2\|\rvgone\|_2\|\rvgtwo\|_2}{\|\rvgone\|_2^2 + \|\rvgtwo\|_2^2}$ and $\ell \geq \xi(\rvgone, \rvgtwo) = \frac{(1 - \cos^2\phi_{12})(\|\rvgone-\rvgtwo\|_2^2)}{\|\rvgone + \rvgtwo\|_2^2}L$, we have $$\frac{1}{2}\|\rvgone+\rvgtwo\|_2^2\ell - \frac{1-\cos^2\phi_{12}}{2}\|\rvgone-\rvgtwo\|_2^2L \geq 0$$ and $$(\|\rvgone\|_2^2+\|\rvgtwo\|_2^2)\cos^2\phi_{12} + 2\|\rvgone\|_2\|\rvgtwo\|_2\cos\phi_{12} \geq 0.$$ By the condition that $t \geq \frac{2}{\ell - \xi(\rvgone, \rvgtwo)L} = \frac{2}{\ell - \frac{(1 - \cos^2\phi_{12})\|\rvgone-\rvgtwo\|_2^2}{\|\rvgone + \rvgtwo\|_2^2}L}$ and monotonicity of linear functions, we have the following:
\begin{align}
    &\loss(\theta^{\text{MT}})\!-\!\loss(\theta^{\text{PCGrad}}) \geq [\left(\frac{1}{2}\|\rvgone\!+\!\rvgtwo\|_2^2\ell-\frac{1\!-\!\cos^2\phi_{12}}{2}\!\cdot\!\|\rvgone\!-\!\rvgtwo\|_2^2L\right)\cdot\frac{2}{\ell\!-\! \frac{(1-\cos^2\phi_{12})\|\rvgone-\rvgtwo\|_2^2}{\|\rvgone + \rvgtwo\|_2^2}L}\nonumber\\
    &- \left((\|\rvgone\|_2^2+\|\rvgtwo\|_2^2)\cos^2\phi_{12} + 2\|\rvgone\|_2\|\rvgtwo\|_2\cos\phi_{12}\right)]\cdot t\nonumber\\
    &=[\|\rvgone+\rvgtwo\|_2^2\cdot\left(\ell-\frac{(1-\cos^2\phi_{12})\cdot\|\rvgone-\rvgtwo\|_2^2}{\|\rvgone+\rvgtwo\|_2^2})L\right)\cdot\frac{1}{\ell - \frac{(1 - \cos^2\phi_{12})\|\rvgone-\rvgtwo\|_2^2}{\|\rvgone + \rvgtwo\|_2^2}L}\nonumber\\
    &- \left((\|\rvgone\|_2^2+\|\rvgtwo\|_2^2)\cos^2\phi_{12} + 2\|\rvgone\|_2\|\rvgtwo\|_2\cos\phi_{12}\right)]\cdot t\nonumber\\
    &= \left[\|\rvgone+\rvgtwo\|_2^2 - \left((\|\rvgone\|_2^2+\|\rvgtwo\|_2^2)\cos^2\phi_{12}+ 2\|\rvgone\|_2\|\rvgtwo\|_2\cos\phi_{12}\right)\right]\cdot t\nonumber\\
    &=\left[\|\rvgone\|_2^2\!+\!\|\rvgtwo\|_2^2\!+\!2\|\rvgone\|_2\|\rvgtwo\|_2\!\cos\phi_{12}\!-\! \left((\|\rvgone\|_2^2\!+\!\|\rvgtwo\|_2^2)\cos^2\!\phi_{12}\! +\! 2\|\rvgone\|_2\|\rvgtwo\|_2\!\cos\phi_{12}\right)\right]\cdot t\nonumber\\
    &= (1-\cos^2\phi_{12})(\|\rvgone\|_2^2+\|\rvgtwo\|_2^2)\cdot t\nonumber\\
    &\geq 0\nonumber
\end{align}
\end{proof}

\subsection{PCGrad: Sufficient and Necessary Conditions for Loss Improvement}
\label{app:theorem3}

Beyond the sufficient conditions shown in Theorem~\ref{thm:local}, we also present the sufficient and necessary conditions under which PCGrad achieves lower loss after one gradient update in Theorem~\ref{thm:necessary} in the two-task setting.

\begin{restatable}{theorem}{theoremnecessary}
\label{thm:necessary}
Suppose $\loss$ is differentiable and the gradient of $\loss$ is Lipschitz continuous with constant $L > 0$. 
Let $\theta^{\text{MT}}$ and $\theta^{\text{PCGrad}}$ be the parameters after applying one update to $\theta$ with $\rvg$ and PCGrad-modified gradient $\rvg^{\text{PC}}$ respectively, with step size $t > 0$. 
Moreover, assume $\mathbf{H}(\loss; \theta, \theta^{\text{MT}}) \geq \ell\|\rvg\|_2^2$ for some constant $\ell \leq L$, i.e. the multi-task curvature is lower-bounded. Then $\loss(\theta^{\text{PCGrad}}) \leq \loss(\theta^{\text{MT}})$ if and only if 
\begin{itemize}
    \setlength\itemsep{1em}
    \item $-\Phi(\rvgone, \rvgtwo) \leq \cos\phi_{12} < 0$
    \item $\ell \leq \xi(\rvgone, \rvgtwo)L$
    \item $0 < t \leq \frac{(\|\rvgone\|^2_2 + \|\rvgtwo\|^2_2)\cos^2\phi_{12} + 2\|\rvgone\|_2\|\rvgtwo\|_2\cos\phi_{12}}{\frac{1}{2}\|\rvgone + \rvgtwo\|_2^2\ell - \frac{1 - \cos^2\phi_{12}}{2}(\|\rvgone-\rvgtwo\|_2^2)L}$
\end{itemize} or
\begin{itemize}
    \setlength\itemsep{1em}
    \item $\cos\phi_{12} \leq -\Phi(\rvgone, \rvgtwo)$
    \item $\ell \geq \xi(\rvgone, \rvgtwo)L$
    \item $t \geq \frac{(\|\rvgone\|^2_2 + \|\rvgtwo\|^2_2)\cos^2\phi_{12} + 2\|\rvgone\|_2\|\rvgtwo\|_2\cos\phi_{12}}{\frac{1}{2}\|\rvgone + \rvgtwo\|_2^2\ell - \frac{1 - \cos^2\phi_{12}}{2}(\|\rvgone-\rvgtwo\|_2^2)L}.$
\end{itemize}
\end{restatable}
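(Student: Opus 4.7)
The plan is to start from the algebraic identity derived in the proof of Theorem~\ref{thm:local} and convert it into a biconditional through careful sign analysis. Concretely, I would first apply Taylor's theorem to $\loss(\theta^{\text{MT}})$ together with the curvature hypothesis $\mathbf{H}(\loss;\theta,\theta^{\text{MT}}) \geq \ell\|\rvg\|_2^2$ to lower-bound $\loss(\theta^{\text{MT}})$, and apply the $L$-smooth descent lemma to upper-bound $\loss(\theta^{\text{PCGrad}})$. Subtracting the two and expanding in $\rvgone$, $\rvgtwo$ and $\cos\phi_{12}$ via $\rvg = \rvgone + \rvgtwo$ reproduces verbatim the simplifications in the proof of Theorem~\ref{thm:local}, yielding
\begin{equation*}
\loss(\theta^{\text{MT}}) - \loss(\theta^{\text{PCGrad}}) \;\geq\; t\,(A\,t - B),
\end{equation*}
where $A := \tfrac{1}{2}\|\rvgone+\rvgtwo\|_2^2\,\ell - \tfrac{1-\cos^2\phi_{12}}{2}\|\rvgone-\rvgtwo\|_2^2\,L$ and $B := (\|\rvgone\|_2^2+\|\rvgtwo\|_2^2)\cos^2\phi_{12} + 2\|\rvgone\|_2\|\rvgtwo\|_2\cos\phi_{12}$, so that the statement to prove reduces to characterizing when $t(At-B) \geq 0$.

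Next I would rewrite these coefficients to tie their signs directly to the quantities appearing in the theorem. The factorization $A = \tfrac{1}{2}\|\rvgone+\rvgtwo\|_2^2\,(\ell - \xi(\rvgone,\rvgtwo)\,L)$ shows $\mathrm{sgn}(A) = \mathrm{sgn}(\ell - \xi L)$, while $B = \cos\phi_{12}\bigl[(\|\rvgone\|_2^2+\|\rvgtwo\|_2^2)\cos\phi_{12} + 2\|\rvgone\|_2\|\rvgtwo\|_2\bigr]$ gives, in the PCGrad-relevant regime $\cos\phi_{12} < 0$ (otherwise $\theta^{\text{PCGrad}} = \theta^{\text{MT}}$ and the claim is vacuous), $B \leq 0 \Leftrightarrow -\Phi(\rvgone,\rvgtwo) \leq \cos\phi_{12} < 0$ and $B \geq 0 \Leftrightarrow \cos\phi_{12} \leq -\Phi(\rvgone,\rvgtwo)$. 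The biconditional then reduces to an exhaustive case split on $\mathrm{sgn}(A)$: since $t > 0$, $t(At - B) \geq 0$ iff $At \geq B$; when $A > 0$ this reads $t \geq B/A$ and yields a nontrivial constraint on positive $t$ only if $B \geq 0$, matching the theorem's second bullet-list (with $\ell \geq \xi L$ and $\cos\phi_{12} \leq -\Phi$), while when $A < 0$ the inequality becomes $t \leq B/A$, which admits any positive $t$ only if $B < 0$ so that $B/A > 0$, matching the first bullet-list (with $\ell \leq \xi L$ and $-\Phi \leq \cos\phi_{12} < 0$). The boundary subcases $A=0$ or $B=0$ collapse into the endpoint inequalities.

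I expect the main obstacle, and the key departure from Theorem~\ref{thm:local}, to be the ``only if'' direction, since the bound $\loss(\theta^{\text{MT}}) - \loss(\theta^{\text{PCGrad}}) \geq t(At - B)$ is by itself only sufficient. To lift it to a biconditional, one must argue the bound is tight enough that $\mathrm{sgn}(t(At-B))$ governs $\mathrm{sgn}(\loss(\theta^{\text{MT}}) - \loss(\theta^{\text{PCGrad}}))$ under the theorem's hypotheses. I would do this by constructing, for any choice of $(\rvgone,\rvgtwo,\ell,L)$ satisfying the hypotheses but violating the stated conditions, an explicit pair of quadratic task objectives at which Taylor's remainder achieves the lower bound $\ell\|\rvg\|_2^2$ and the descent lemma is simultaneously saturated; on such an instance the actual loss difference equals $t(At-B)$ exactly and is strictly negative, exhibiting a concrete counterexample to $\loss(\theta^{\text{PCGrad}}) \leq \loss(\theta^{\text{MT}})$. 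This worst-case quadratic construction, rather than the algebra, is where the real work lies; once in hand, the case analysis in the previous paragraph supplies both directions of the biconditional.
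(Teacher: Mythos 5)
Your algebraic core is exactly the paper's: both start from the inequality $\loss(\theta^{\text{MT}}) - \loss(\theta^{\text{PCGrad}}) \geq t(At - B)$ inherited from the proof of Theorem~\ref{thm:local}, identify $\mathrm{sgn}(A)$ with $\mathrm{sgn}(\ell - \xi(\rvgone,\rvgtwo)L)$ and $\mathrm{sgn}(B)$ with the position of $\cos\phi_{12}$ relative to $-\Phi(\rvgone,\rvgtwo)$, and then split into the two sign-consistent cases to recover the two bullet lists. Where you depart from the paper is the ``only if'' direction, and your instinct there is correct: the paper's proof simply declares that ``all we need is'' non-negativity of the right-hand side $t(At-B)$ and performs the case analysis on that expression, which establishes necessity only for the \emph{lower bound} to be non-negative, not for the actual inequality $\loss(\theta^{\text{PCGrad}}) \leq \loss(\theta^{\text{MT}})$ at a fixed $\loss$. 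The paper offers no tightness argument, so your proposed worst-case quadratic construction is genuinely additional work that the paper does not do. Two caveats on that construction. First, it changes the logical content of the statement: what you would prove is that the conditions are necessary for the guarantee to hold \emph{uniformly over all losses} consistent with the hypotheses (a worst-case reading), not that they are necessary for each fixed $\loss$ --- the latter is false as stated, e.g.\ when $\cos\phi_{12} \geq 0$ the update is unmodified and the conclusion holds trivially while neither bullet list is satisfied, a regime you rightly set aside but the theorem statement does not. Second, saturating the descent-lemma upper bound along $\rvg^{\text{PC}}$ and the curvature lower bound along $\rvg$ simultaneously requires a Hessian whose Rayleigh quotient is $L$ in the direction of $\rvg^{\text{PC}}$ and $\ell$ in the direction of $\rvg$; this is achievable because those two directions are not parallel when $\cos\phi_{12} < 0$, but it generically needs $\ell < L$ strictly and a little care to realize the prescribed $\rvgone,\rvgtwo$ as gradients of two quadratics summing to that Hessian. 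With those details filled in, your argument would be strictly more rigorous than the one in the paper.
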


\vspace{-0.3cm}
\begin{proof}
To show the necessary conditions, from Equation~\ref{eq:difference}, all we need is
\begin{align}
    &t\cdot[(\frac{1}{2}\|\rvgone+\rvgtwo\|_2^2\ell - \frac{1-\cos^2\phi_{12}}{2}(\|\rvgone-\rvgtwo\|_2^2)L)t\nonumber\\
    &- \left((\|\rvgone\|_2^2+\|\rvgtwo\|_2^2)\cos^2\phi_{12} + 2\|\rvgone\|_2\|\rvgtwo\|_2\cos\phi_{12}\right)] \geq 0
\end{align}
Since $t \geq 0$, it reduces to show
\begin{align}
    &(\frac{1}{2}\|\rvgone+\rvgtwo\|_2^2\ell - \frac{1-\cos^2\phi_{12}}{2}(\|\rvgone-\rvgtwo\|_2^2)L)t\nonumber\\
    &- \left((\|\rvgone\|_2^2+\|\rvgtwo\|_2^2)\cos^2\phi_{12} + 2\|\rvgone\|_2\|\rvgtwo\|_2\cos\phi_{12}\right) \geq 0\label{eq:necessary_cond}
\end{align}
For Equation~\ref{eq:necessary_cond} to hold while ensuring that $t \geq 0$, there are two cases:
\begin{itemize}
    \item $\frac{1}{2}\|\rvgone+\rvgtwo\|_2^2\ell - (1-\cos^2\phi_{12})(\|\rvgone\|_2^2+\|\rvgtwo\|_2^2)L \geq 0$,\\
    $(\|\rvgone\|_2^2+\|\rvgtwo\|_2^2)\cos^2\phi_{12} + 2\|\rvgone\|_2\|\rvgtwo\|_2\cos\phi_{12} \geq 0$,\\
    $t \geq \frac{(\|\rvgone\|^2_2 + \|\rvgtwo\|^2_2)\cos^2\phi_{12} + 2\|\rvgone\|_2\|\rvgtwo\|_2\cos\phi_{12}}{\frac{1}{2}\|\rvgone + \rvgtwo\|_2^2\ell - \frac{1 - \cos^2\phi_{12}}{2}(\|\rvgone-\rvgtwo\|_2^2)L}$
    \item $\frac{1}{2}\|\rvgone+\rvgtwo\|_2^2\ell - (1-\cos^2\phi_{12})(\|\rvgone\|_2^2+\|\rvgtwo\|_2^2)L \leq 0$,\\
    $(\|\rvgone\|_2^2+\|\rvgtwo\|_2^2)\cos^2\phi_{12} + 2\|\rvgone\|_2\|\rvgtwo\|_2\cos\phi_{12} \leq 0$,\\
    $t \geq \frac{(\|\rvgone\|^2_2 + \|\rvgtwo\|^2_2)\cos^2\phi_{12} + 2\|\rvgone\|_2\|\rvgtwo\|_2\cos\phi_{12}}{\frac{1}{2}\|\rvgone + \rvgtwo\|_2^2\ell - \frac{1 - \cos^2\phi_{12}}{2}(\|\rvgone-\rvgtwo\|_2^2)L}$
\end{itemize}
, which can be simplified to
\begin{itemize}
    \item $\cos\phi_{12} \leq -\frac{2\|\rvgone\|_2\|\rvgtwo\|_2}{\|\rvgone\|_2^2 + \|\rvgtwo\|_2^2} = -\Phi(\rvgone, \rvgtwo)$,\\
    $\ell \geq \frac{(1 - \cos^2\phi_{12})(\|\rvgone\|_2^2 + \|\rvgtwo\|_2^2)}{\|\rvgone + \rvgtwo\|_2^2}L = \xi(\rvgone, \rvgtwo)$,\\
    $t \geq \frac{(\|\rvgone\|^2_2 + \|\rvgtwo\|^2_2)\cos^2\phi_{12} + 2\|\rvgone\|_2\|\rvgtwo\|_2\cos\phi_{12}}{\frac{1}{2}\|\rvgone + \rvgtwo\|_2^2\ell - \frac{1 - \cos^2\phi_{12}}{2}(\|\rvgone-\rvgtwo\|_2^2)L}$
    \item $-\frac{2\|\rvgone\|_2\|\rvgtwo\|_2}{\|\rvgone\|_2^2 + \|\rvgtwo\|_2^2} = -\Phi(\rvgone, \rvgtwo) \leq \cos\phi_{12} < 0$,\\
    $\ell \leq \frac{(1 - \cos^2\phi_{12})(\|\rvgone\|_2^2 + \|\rvgtwo\|_2^2)}{\|\rvgone + \rvgtwo\|_2^2}L = \xi(\rvgone, \rvgtwo)$,\\
    $0 < t \leq \frac{(\|\rvgone\|^2_2 + \|\rvgtwo\|^2_2)\cos^2\phi_{12} + 2\|\rvgone\|_2\|\rvgtwo\|_2\cos\phi_{12}}{\frac{1}{2}\|\rvgone + \rvgtwo\|_2^2\ell - \frac{1 - \cos^2\phi_{12}}{2}(\|\rvgone-\rvgtwo\|_2^2)L}$.
\end{itemize}

The sufficient conditions hold as we can plug the conditions to RHS of Equation~\ref{eq:necessary_cond} and achieve non-negative result.
\end{proof}
\vspace{-0.3cm}

\subsection{Convergence of PCGrad with Momentum-Based Gradient Descent}
\label{app:momentum}

In this subsection, we show convergence of PCGrad with momentum-based methods, which is more aligned with our practical implementation. In our analysis, we consider the heavy ball method~\cite{polyak1964some} as follows: $$\theta_{k+1} \leftarrow \theta_k - \alpha_k \nabla\loss(\theta_k) + \beta_k (\theta_k - \theta_{k-1})$$ where $k$ denotes the $k$-th step and $\alpha_k$ and $\beta_k$ are step sizes for the gradient and momentum at step $k$ respectively. We now present our theorem.

\begin{restatable}{theorem}{momentum}
\label{thm:momentum}
Assume $\loss_1$ and $\loss_2$ are $\mu_1$- and $\mu_2$-strongly convex and also $L_1$- and $L_2$-smooth respectively where $\mu_1, \mu_2, L_1, L_2 > 0$. Define $\phi^k_{12}$ as the angle between two task gradients $\rvgone{(\theta_k)}$ and $\rvgtwo{(\theta_k)}$ and define $R_k = \frac{\|\rvgone(\theta_k)\|}{\|\rvgtwo(\theta_k)\|}$. Denote $\mu_k = (1 - \cos\phi^k_{12}/R_k)\mu_1 + (1 - \cos\phi^k_{12}\cdot R_k)\mu_2$ and $L_k = (1 - \cos\phi^k_{12}/R_k)L_1 + (1 - \cos\phi^k_{12}\cdot R_k)L_2$
Then, the PCGrad update rule of the heavy ball method with step sizes $\alpha_k = \frac{4}{\sqrt{L_k} + \sqrt{\mu_k}}$ and $\beta_k = \max\{|1 - \sqrt{\alpha_k\mu_k}|,|1 - \sqrt{\alpha_kL_k}|\}^2$ will converge linearly to either (1) a location in the optimization landscape where $\cos(\phi^k_{12}) = -1$ or (2) the optimal value $\loss(\theta^*)$.
\end{restatable}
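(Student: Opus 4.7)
The plan is to recognize the PCGrad-modified gradient as the exact gradient of a state-dependent weighted sum of the two task losses, and then invoke the classical Polyak heavy-ball rate on that surrogate with the prescribed step sizes. Unrolling lines 6--7 of Algorithm~\ref{alg:dgrad-o} in the two-task case yields, when $\cos\phi^k_{12}<0$,
\[
\rvg^{\text{PC}}(\theta_k) \;=\; \bigl(1-\tfrac{\cos\phi^k_{12}}{R_k}\bigr)\,\rvgone(\theta_k) \;+\; \bigl(1-\cos\phi^k_{12}\cdot R_k\bigr)\,\rvgtwo(\theta_k),
\]
and the case $\cos\phi^k_{12}\ge 0$ corresponds to formally setting $\cos\phi^k_{12}=0$ in the same formula. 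Since $\cos\phi^k_{12}<0$ and $R_k>0$, both coefficients are $\ge 1$, so the surrogate $\tilde\loss_k := (1-\cos\phi^k_{12}/R_k)\loss_1 + (1-\cos\phi^k_{12}\cdot R_k)\loss_2$ is a positive combination of the task losses; preservation of strong convexity and smoothness under such combinations makes it $\mu_k$-strongly convex and $L_k$-smooth with exactly the $\mu_k,L_k$ of the theorem, and $\nabla\tilde\loss_k(\theta_k)=\rvg^{\text{PC}}(\theta_k)$ by construction.

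Consequently, one step of PCGrad heavy-ball is a step of classical heavy-ball on $\tilde\loss_k$, and the prescribed $\alpha_k$ and $\beta_k$ are Polyak's optimal choices for a $\mu_k$-strongly convex, $L_k$-smooth objective. The standard proof lifts the recursion to the state $z_k=(\theta_k-\theta^*_k,\,\theta_{k-1}-\theta^*_k)$ with $\theta^*_k=\arg\min\tilde\loss_k$, and bounds the spectral radius of the resulting block recursion by $\sqrt{\beta_k}$, giving a per-step contraction toward $\theta^*_k$ at that rate. A fixed-point analysis of $\rvg^{\text{PC}}$ shows that it vanishes only on $\{\rvg=0\}\cup\{\cos\phi^k_{12}=-1\}$: if $\rvgone=-R_k\rvgtwo$, the two PCGrad coefficients conspire to cancel the combined gradient identically, while strong convexity of $\loss_1+\loss_2$ yields a unique stationary point $\theta^*$ off that degenerate locus --- producing the dichotomy in the theorem.

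The main obstacle is that $\tilde\loss_k$ itself drifts with $k$: its coefficients, its condition number $L_k/\mu_k$, and its minimizer all depend on $\theta_k$ through the angle and the gradient magnitude ratio. The cleanest route is a time-varying Lyapunov argument using the Polyak-style function $V_k(\theta,\theta')=\tilde\loss_k(\theta)-\tilde\loss_k(\theta^*_k)+\gamma_k\|\theta-\theta'\|^2$ with $\gamma_k$ calibrated to $(\mu_k,L_k)$. One shows a one-step inequality $V_{k+1}\le\beta_k V_k+\varepsilon_k$, where $\varepsilon_k$ absorbs the drift $\|\theta^*_{k+1}-\theta^*_k\|$, and obtains a uniform-in-$k$ bound on $L_k/\mu_k$ from boundedness of $R_k$ away from $0$ and $\infty$, which in turn follows from the $\mu_i$-strong convexity of each $\loss_i$ on any bounded sublevel set of $\loss$. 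Telescoping then yields the linear rate. The truly delicate point is that this drift control can fail precisely when $\cos\phi^k_{12}\to -1$, in which case $\rvg^{\text{PC}}\to 0$ and the heavy-ball iterates stall at a degenerate configuration --- exactly the first alternative in the conclusion; otherwise the compounded contractions force linear convergence to $\theta^*$.
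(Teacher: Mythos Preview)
Your opening decomposition of $\rvg^{\text{PC}}(\theta_k)$ as a positive combination of the task gradients is exactly the identity the paper derives and uses. From that point on, however, the paper takes a shorter route that entirely sidesteps the drift problem you flag as the main obstacle.

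Instead of introducing a surrogate $\tilde\loss_k$ with its own moving minimizer $\theta^*_k$, the paper works with the \emph{fixed} reference $\theta^*$ throughout. It applies the mean value theorem separately to each $\nabla\loss_i$ to write
\[
\rvg^{\text{PC}}(\theta_k)\;=\;\Bigl[(1-\cos\phi^k_{12}/R_k)\,\nabla^2\loss_1(z_k)\;+\;(1-\cos\phi^k_{12}\cdot R_k)\,\nabla^2\loss_2(z'_k)\Bigr](\theta_k-\theta^*)\;=\;H_k(\theta_k-\theta^*),
\]
with $z_k,z'_k$ on the segment $[\theta_k,\theta^*]$. Strong convexity and smoothness place the eigenvalues of $H_k$ in $[\mu_k,L_k]$. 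The heavy-ball recursion then becomes a block-matrix iteration on the stacked vector $(\theta_k-\theta^*,\,\theta_{k-1}-\theta^*)$, and a standard spectral bound on the $2\times 2$ block companion matrix (the paper cites Lemma~3.1 of \cite{notes}) gives the per-step contraction $(\sqrt{\kappa_k}-1)/(\sqrt{\kappa_k}+1)$ with the prescribed $\alpha_k,\beta_k$. Because the reference point never moves, there is no drift term, no $\varepsilon_k$, no need to control $\|\theta^*_{k+1}-\theta^*_k\|$ or the variation of $\gamma_k$; only the transition matrix changes from step to step, and each step is individually a contraction toward the same $\theta^*$.

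Your time-varying Lyapunov route is left at the level of a sketch, and the gaps are real: you never show that the drift $\varepsilon_k$ decays fast enough for the telescoped sum to preserve a geometric rate, nor that $\theta^*_k\to\theta^*$ so that you converge to the right limit. Your boundedness-of-$R_k$ argument is also loose. The paper's fixed-target trick makes all of this unnecessary; the price it pays is that the mean-value step implicitly uses $\nabla\loss_i(\theta^*)=0$ for each $i$, i.e., a shared minimizer, so if you want a proof without that assumption your surrogate-plus-drift program may be the right direction---but it would need substantially more work than what you have written.
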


\begin{proof}
We first observe that the PCGrad-modified gradient $\rvg^{\text{PC}}$ has the following identity:
\begin{align}
    \rvg^\text{PC} &= \rvg - \frac{\rvgone \cdot \rvgtwo}{||\rvgone||^2}\rvgone - \frac{\rvgone \cdot \rvgtwo}{||\rvgtwo||^2}\rvgtwo\nonumber\\
    &= (1 - \frac{\rvgone \cdot \rvgtwo}{||\rvgone||^2})\rvgone + (1 - \frac{\rvgone \cdot \rvgtwo}{||\rvgtwo||^2})\rvgtwo\nonumber\\
    &= (1 - \frac{\rvgone \cdot \rvgtwo}{\|\rvgone\|\|\rvgtwo\|}\frac{\|\rvgtwo\|}{\|\rvgone\|})\rvgone + (1 - \frac{\rvgone \cdot \rvgtwo}{\|\rvgone\|\|\rvgtwo\|}\frac{\|\rvgone\|}{\|\rvgtwo\|})\rvgtwo\nonumber\\
    &= (1 - \cos\phi_{12} / R)\rvgone + (1 - \cos\phi_{12}\cdot R)\rvgtwo\label{eq:split}.
\end{align}
Applying Equation~\ref{eq:split}, we can write the PCGrad update rule of the heavy ball method in matrix form as follows:
\begin{align*}
    \left\|\begin{bmatrix}
    \theta_{k+1} - \theta^*\\
    \theta_k - \theta^*
    \end{bmatrix}\right\|_2 &= \left\|\begin{bmatrix}
    \theta_{k} + \beta_k(\theta_k - \theta_{k-1}) -  \theta^* \\
    \theta_k - \theta^*
    \end{bmatrix} -  \alpha_k\begin{bmatrix}
    \rvg^{\text{PC}}(\theta_k) \\
    0
    \end{bmatrix}\right\|_2\\
    &= \left\|\begin{bmatrix}
    \theta_{k} + \beta_k(\theta_k - \theta_{k-1}) -  \theta^* \\
    \theta_k - \theta^*
    \end{bmatrix}\right.\\
    &\left.-  \alpha_k\begin{bmatrix}
    (1 - \cos\phi^k_{12}/R_k)\rvgone(\theta_k) + (1 - \cos\phi^k_{12}\cdot R_k)\rvgtwo(\theta_k) \\
    0
    \end{bmatrix}\right\|_2\\
    &= \left\|\begin{bmatrix}
    \theta_{k} + \beta_k(\theta_k - \theta_{k-1}) -  \theta^* \\
    \theta_k - \theta^*
    \end{bmatrix}\right.\\
    &\left.- \alpha_k\begin{bmatrix}
    \left[(1 - \cos\phi^k_{12}/R_k)\nabla^2\mathcal{L}_1(z_k) + (1 - \cos\phi^k_{12}\cdot R_k)\nabla^2\mathcal{L}_2(z'_k)\right](\theta_k -  \theta^*) \\
    0
    \end{bmatrix}\right\|_2\\
    &\text{for some }z_k,  z'_k\text{ on the line segment between }\theta_k\text{ and }\theta^*\\
    &=\left\|\begin{bmatrix}
    (1+\beta_k)I-\alpha_kH_k & -\beta_k I\\
    I & 0
    \end{bmatrix}\begin{bmatrix}
    \theta_k -  \theta^*\\
    \theta_{k-1} - \theta^*
    \end{bmatrix}\right\|_2\\
    &\leq \left\|\begin{bmatrix}
    (1+\beta_k)I-\alpha_kH_k & -\beta_k I\\
    I & 0
    \end{bmatrix}\right\|_2\left\|\begin{bmatrix}
    \theta_k-\theta^*\\
    \theta_{k-1}-\theta^*
    \end{bmatrix}\right\|_2
\end{align*}
where $H_k = (1 - \cos\phi^k_{12}/R_k)\nabla^2\mathcal{L}_1(z_k) + (1 - \cos\phi^k_{12} \cdot R_k)\nabla^2\mathcal{L}_2(z'_k)$.

By strong convexity and smoothness of $\loss_1$ and $\loss_2$, we have the eigenvalues of $\nabla^2\mathcal{L}_1(z_k)$ are between $\mu_1$ and $L_1$. Similarly, the eigenvalues of $\nabla^2\mathcal{L}_2(z'_k)$ are between $\mu_2$ and $L_2$. Thus the eigenvalues of $H_k$ are between $\mu_k = (1 - \cos\phi^k_{12}/R_k)\mu_1 + (1 - \cos\phi^k_{12}\cdot R_k)\mu_2$ and $L_k = (1 - \cos\phi^k_{12}/R_k)L_1 + (1 - \cos\phi^k_{12}\cdot R_k)L_2$~\cite{fulton2000eigenvalues}. Hence following Lemma 3.1 in ~\cite{notes}, we have $$\left\|\begin{bmatrix}
    (1+\beta_k)I-\alpha_kH_k & -\beta_k I\\
    I & 0
    \end{bmatrix}\right\|_2 \leq \max\{|1 - \sqrt{\alpha_k\mu_k}|,|1 - \sqrt{\alpha_kL_k}|\}.$$
Thus we have
\begin{align}
    \left\|\begin{bmatrix}
    \theta_{k+1} - \theta^*\\
    \theta_k - \theta^*
    \end{bmatrix}\right\|_2 &\leq \max\{|1 - \sqrt{\alpha_k\mu_k}|,|1 - \sqrt{\alpha_kL_k}|\}\left\|\begin{bmatrix}
    \theta_k-\theta^*\\
    \theta_{k-1}-\theta^*
    \end{bmatrix}\right\|_2\nonumber\\
    &= \frac{\sqrt{\kappa_k}-1}{\sqrt{\kappa_k}+1}\left\|\begin{bmatrix}
    \theta_k-\theta^*\\
    \theta_{k-1}-\theta^*
    \end{bmatrix}\right\|_2\label{eq:sub}\\
    &\leq \left\|\begin{bmatrix}
    \theta_k-\theta^*\\
    \theta_{k-1}-\theta^*
    \end{bmatrix}\right\|_2\nonumber
\end{align}
where $\kappa_k = \frac{L_k}{\mu_k}$ and Equation~\ref{eq:sub} follows from substitution $\alpha_k = \frac{4}{\sqrt{L_k} + \sqrt{\mu_k}}$. Hence PCGrad with heavy ball method converges linearly if $\cos\phi^k_{12} \neq -1$.
\end{proof}

\section{Empirical Objective-Wise Evaluations of PCGrad}
\label{app:objective-wise}

In this section, we visualize the per-task training loss and validation loss curves respectively on NYUv2. 
The goal of measuring objective-wise performance is to study the convergence of PCGrad in practice, particularly amidst the possibility of slow convergence due to cosine similarities near -1, as discussed in Section~\ref{sec:theory}.

We show the objective-wise evaluation results 
on NYUv2 in Figure~\ref{fig:nyuv2_curve}. 
For evaluations on NYUv2, PCGrad + MTAN attains similar training convergence rate compared to MTAN in three tasks in NYUv2 while converging faster and achieving lower validation loss in 2 out of 3 tasks. Note that in task 0 of the NYUv2 dataset, both methods seem to overfit, suggesting a better regularization scheme for this domain.

In general, these results suggest that PCGrad has a regularization effect on supervised multi-task learning, rather than an improvement on optimization speed or convergence. We hypothesize that this regularization is caused by PCGrad leading to greater sharing of representations across tasks, such that the supervision for one task better regularizes the training of another. 
This regularization effect seems notably different from the effect of PCGrad on reinforcement learning problems, where PCGrad dramatically improves training performance. This suggests that multi-task supervised learning and multi-task reinforcement learning problems may have distinct challenges.

\begin{figure*}[ht]
    \centering
    \includegraphics[width=0.325\columnwidth]{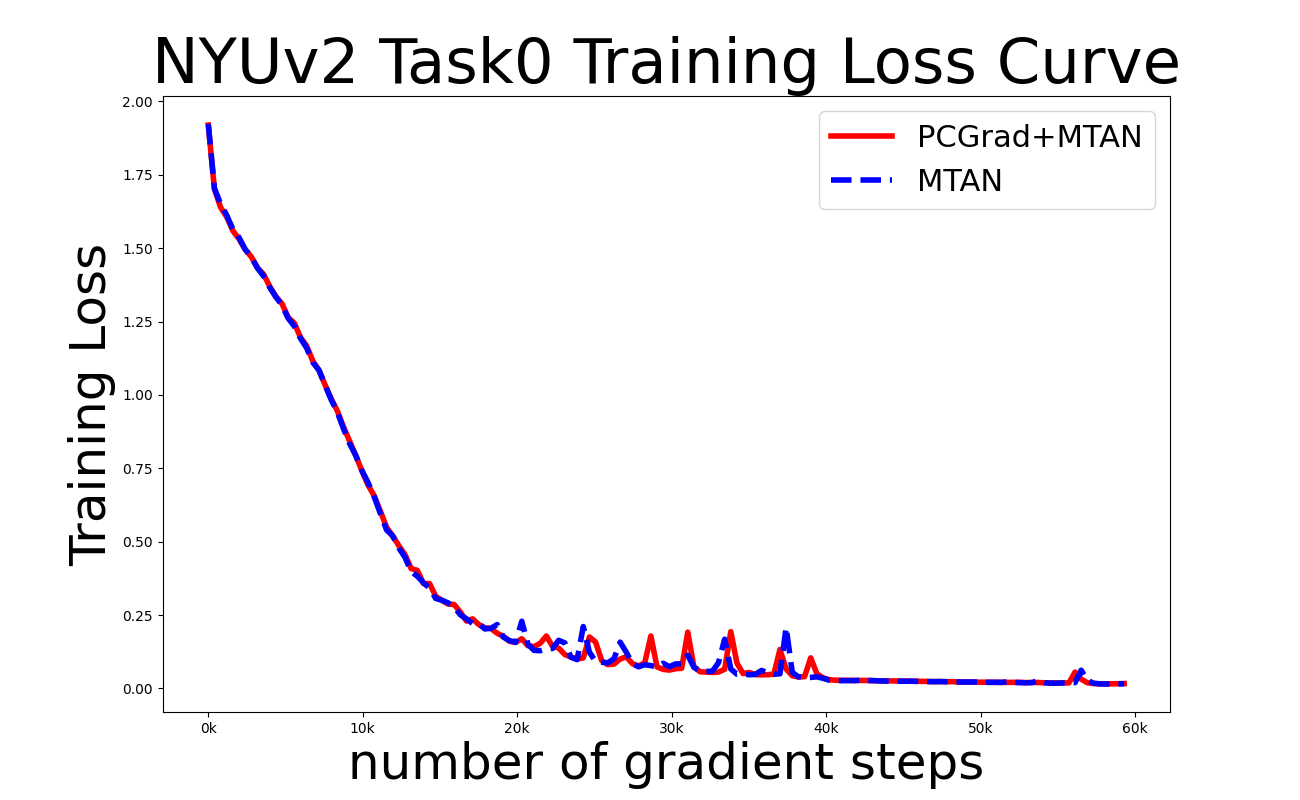}
    \includegraphics[width=0.325\columnwidth]{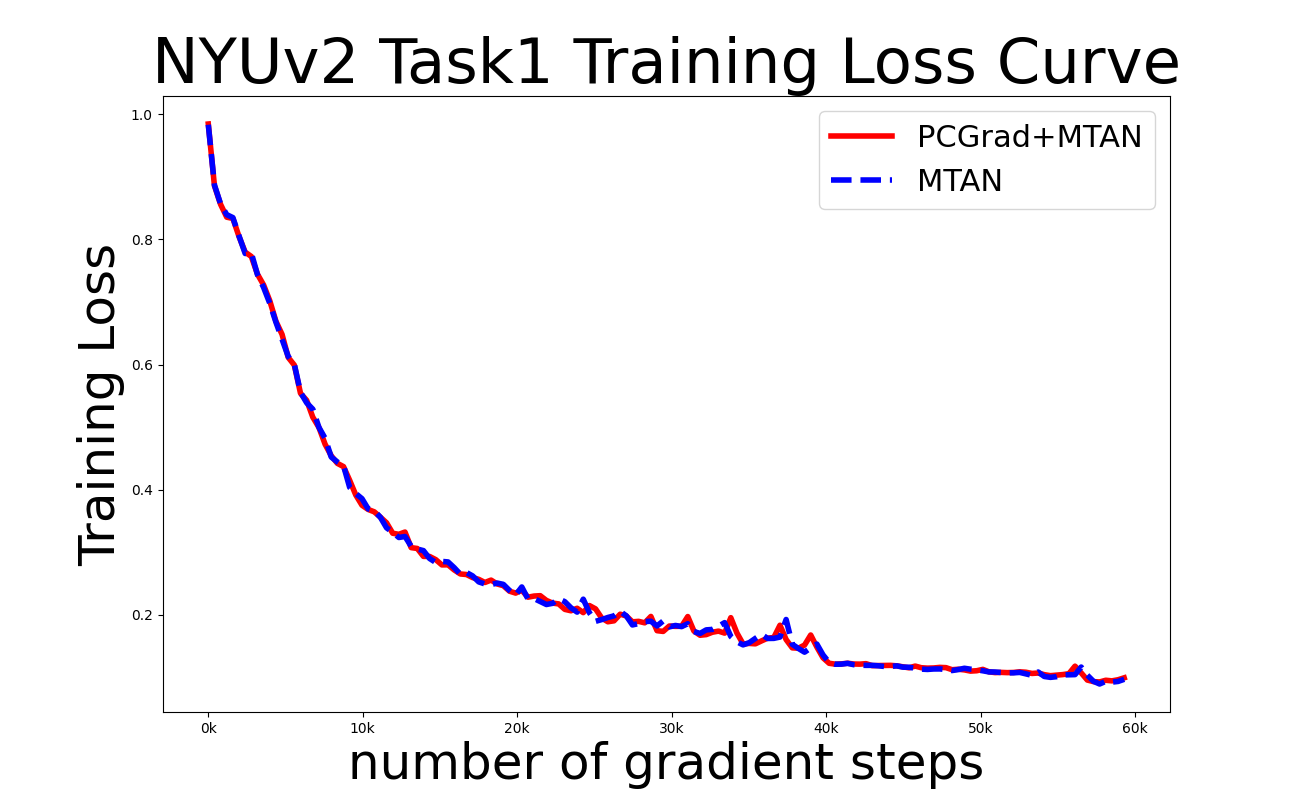}
    \includegraphics[width=0.325\columnwidth]{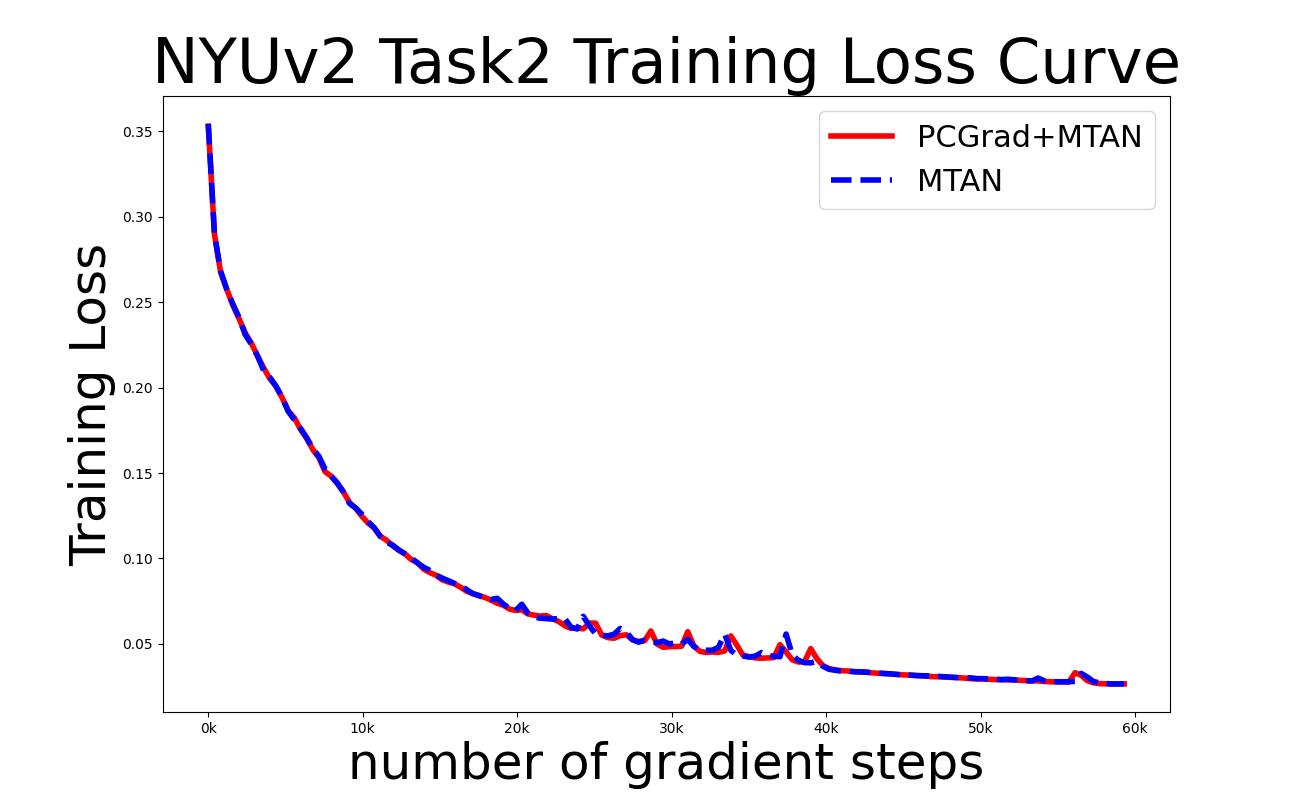}
    \includegraphics[width=0.325\columnwidth]{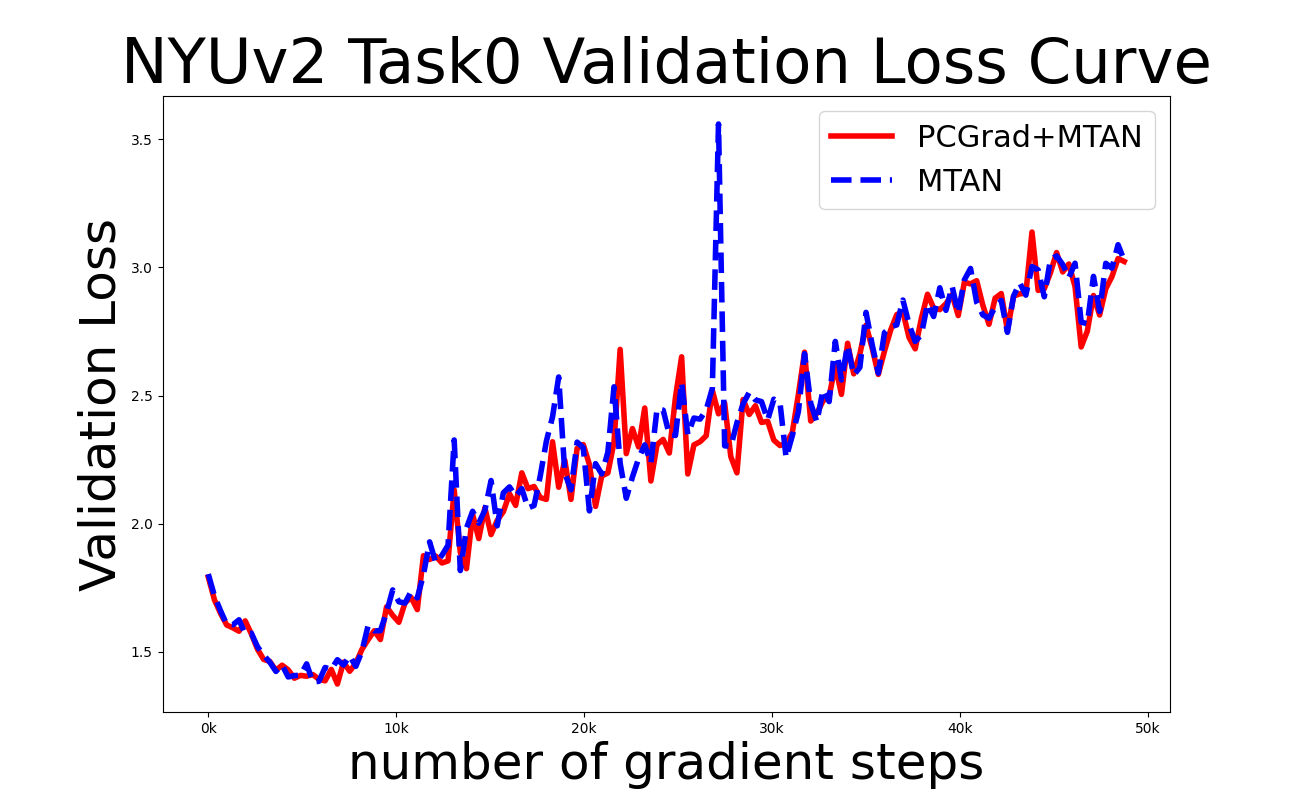}
    \includegraphics[width=0.325\columnwidth]{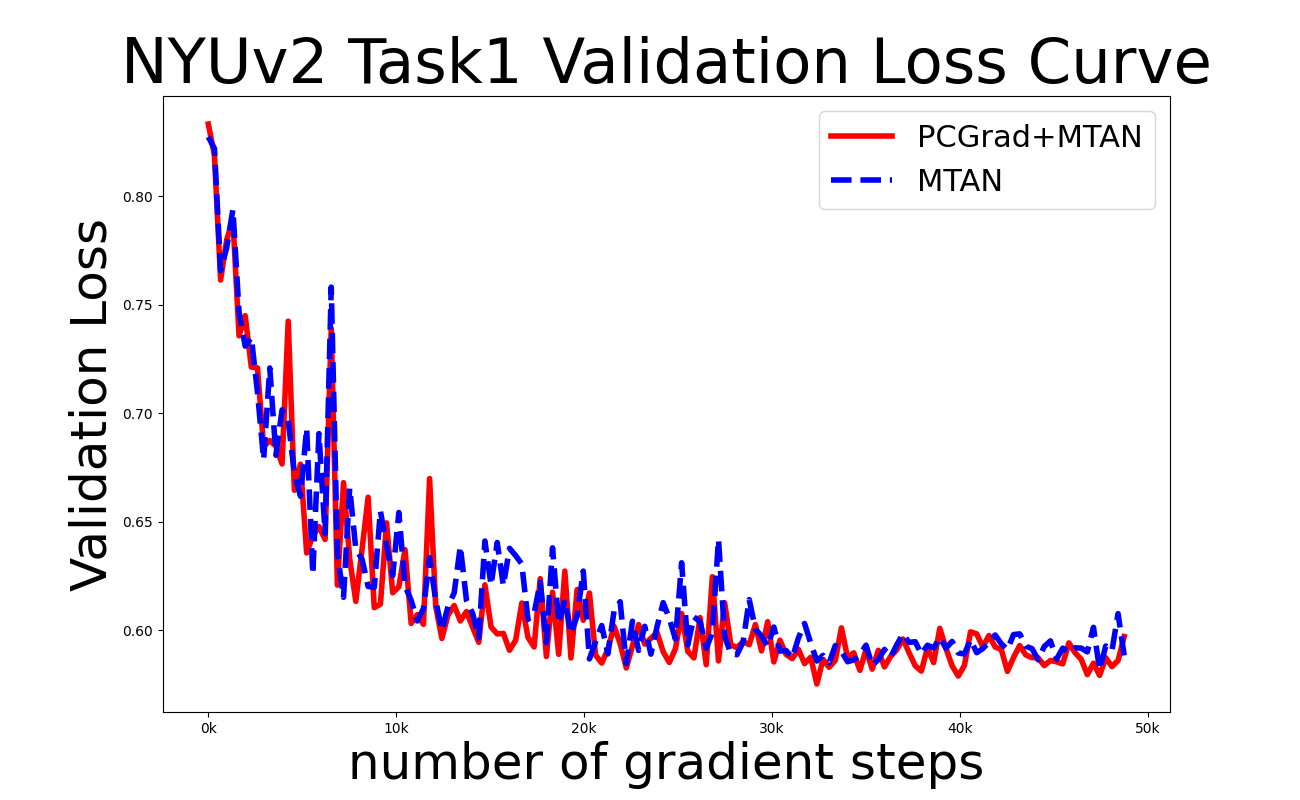}
    \includegraphics[width=0.325\columnwidth]{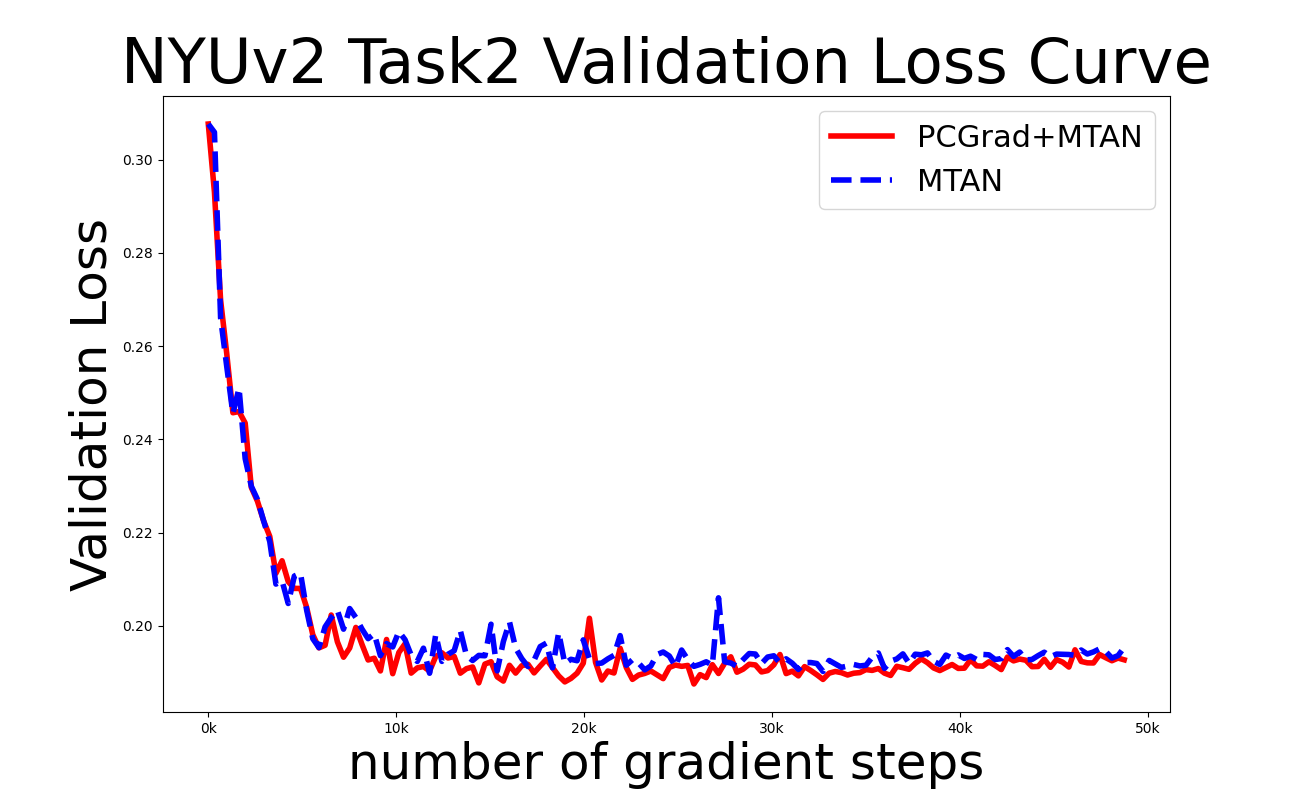}
    \vspace{-0.2cm}
    \caption{\footnotesize Empirical objective-wise evaluations on NYUv2. On the top row, we show the objective-wise training learning curves and on the bottom row, we show the objective-wise validation learning curves. PCGrad+MTAN converges with a similar rate compared to MTAN in training and for validation losses, PCGrad+MTAN converges faster and obtains a lower final validation loss in two out of three tasks. This result corroborate that in practice, PCGrad does not exhibit the potential slow convergence problem shown in Theorem~\ref{thm:converge}.}
    \vspace{-0.1cm}
    \label{fig:nyuv2_curve}
\end{figure*}

\section{Practical Details of PCGrad on Multi-Task and Goal-Conditioned Reinforcement Learning}
\label{app:practical_rl}

\neurips{In our experiments, we apply PCGrad to the soft actor-critic (SAC) algorithm~\citep{haarnoja2018sac}, an off-policy RL method.} In SAC, we employ a Q-learning style gradient to compute the gradient of the Q-function network, $Q_{\phi}(s,a, z_i)$, often known as the critic, and a reparameterization-style gradient to compute the gradient of the policy network $\pi_{\theta}(a|s, z_i)$, often known as the actor. For sampling, we instantiate a set of replay buffers $\{\data_{i}\}_{\task_i \sim p(\task)}$. Training and data collection are alternated throughout training. During a data collection step, we run the policy $\pi_\theta$ on all the tasks $\task_i \sim p(\task)$ to collect an equal number of paths for each task and store the paths of each task $\task_i$ into the corresponding replay buffer $\data_i$. At each training step, we sample an equal amount of data from each replay buffer $\data_i$ to form a stratified batch. For each task $\task_i \sim p(\task)$, the parameters of the critic $\theta$ are optimized to minimize the soft Bellman residual:
\begin{align}
    J^{(i)}_Q(\phi) &= \mathbb{E}_{(s_t, a_t, z_i) \sim \mathcal{D}_{i}}\left[Q_\phi(s_t, a_t, z_i) - (r(s_t, a_t, z_i)+ \gamma V_{\bar{\phi}}(s_{t+1}, z_i))\right]\text{,}
\end{align}
\begin{align}
V_{\bar{\phi}}(s_{t+1}, z_i) &= \mathbb{E}_{a_{t+1}\sim\pi_\theta}\left[Q_{\bar{\phi}}(s_{t+1}, a_{t+1}, z_i)- \alpha\log\pi_\theta(a_{t+1}|s_{t+1}, z_i)\right]\text{,}
\end{align}
where $\gamma$ is the discount factor, $\bar{\phi}$ are the delayed parameters, and $\alpha$ is a learnable temperature that automatically adjusts the weight of the entropy term. For each task $\task_i \sim p(\task)$, the parameters of the policy $\pi_\theta$ are trained to minimize the following objective
\begin{align}
    J^{(i)}_\pi(\theta) &= \mathbb{E}_{s_t \sim \mathcal{D}_{i}}\left[\mathbb{E}_{a_t \sim \pi_\theta(a_t|s_t, z_i))} \left[\alpha\log\pi_\theta(a_{t}|s_{t}, z_i)- Q_\phi(s_t, a_t, z_i)\right]\right]\text{.}
\end{align}
We compute $\nabla_\phi J^{(i)}_Q(\phi)$ and $\nabla_\theta J^{(i)}_\pi(\theta)$ for all $\task_i \sim p(\task)$ and apply PCGrad to both following Algorithm~\ref{alg:dgrad-o}.

In the context of SAC specifically, we also propose to learn the temperature $\alpha$ for adjusting entropy of the policy on a per-task basis. This allows the method to control the entropy of the multi-task policy per-task. The motivation is that if we use a single learnable temperature for adjusting entropy of the multi-task policy $\pi_\theta(a|s, z_i)$, SAC may stop exploring once all easier tasks are solved, leading to poor performance on tasks that are harder or require more exploration. To address this issue, we propose to learn the temperature on a per-task basis as mentioned in Section~\ref{sec:pcgrad_practical}, i.e. using a parametrized model to represent $\alpha_\psi(z_i)$. This allows the method to control the entropy of $\pi_\theta(a|s, z_i)$ per-task.
We optimize the parameters of $\alpha_\psi(z_i)$ using the same constrained optimization framework as in~\cite{haarnoja2018sac}.

When applying PCGrad to goal-conditioned RL, we represent $p(\task)$ as a distribution of goals and let $z_i$ be the encoding of a goal.
Similar to the multi-task supervised learning setting discussed in Section~\ref{sec:pcgrad_practical}, PCGrad may be combined with various architectures designed for multi-task and goal-conditioned RL~\citep{fernando2017pathnet, devin2016modularnet}, where PCGrad operates on the gradients of shared parameters, leaving task-specific parameters untouched.

\section{2D Optimization Landscape Details}
To produce the 2D optimization visualizations in Figure \ref{fig:optlandscape}, we used a parameter vector $\theta = [ \theta_1, \theta_2 ] \in \mathbbm{R}^2$ and the following task loss functions:
\begin{align*}
&\loss_1(\theta) = 20 \log(\max(|.5\theta_1 + \tanh(\theta_2)|, 0.000005)) \\
&\loss_2(\theta) = 25 \log(\max(|.5\theta_1  - \tanh(\theta_2) + 2| , 0.000005))
\end{align*}
The multi-task objective is $\loss(\theta) = \loss_1(\theta) + \loss_2(\theta)$. We initialized $\theta = [0.5, -3]$ and performed 500,000 gradient updates to minimize $\loss$ using the Adam optimizer with learning rate $0.001$. We compared using Adam for each update to using Adam in conjunction with the PCGrad method presented in Section \ref{sec:pcgrad}.
\label{app:optimization_landscape_details}

\section{Additional Multi-Task Supervised Learning Results}
\label{app:additional_mt_sup_results}

We present our multi-task supervised learning results on MultiMNIST and CityScapes here.
\paragraph{MultiMNIST.} \arxiv{Following the same set-up of~\citet{sener2018multi}, for each image, we sample a different one uniformly at random. Then we put one of the image on the top left and the other one on the bottom right. The two tasks in the multi-task learning problem are to classify the digits on the top left (task-L) and bottom right (task-R) respectively. We construct such $60$K examples. We combine PCGrad with the same backbone architecture used in~\citep{sener2018multi} and compare its performance to~\citet{sener2018multi} by running the open-sourced code provided in~\citep{sener2018multi}. As shown in Table~\ref{tbl:multimnist}, PCGrad results 0.13\% and 0.55\% improvement over~\cite{sener2018multi} in left and right digit accuracy respectively.}

\begin{table}[h]
    \begin{center}
    \begin{small}
    \begin{tabular}{l|c|c}
    \toprule
        & left digit &right digit  \\
      \midrule
      \citet{sener2018multi} & 96.45 & 94.95\\
      \midrule
      PCGrad (ours) & \bf 96.58 & \bf 95.50 \\
      \bottomrule
    \end{tabular}
    \end{small}
    \end{center}
    \caption{\footnotesize \arxiv{MultiMNIST results. PCGrad achieves improvements over the approach by~\citet{sener2018multi} in both left and right digit classfication accuracy.}}
    \vspace{-0.4cm}
    \label{tbl:multimnist}
\end{table}

\paragraph{CityScapes.}

The CityScapes dataset~\cite{cordts2016cityscapes} contains 19 classes of street-view images resized to $128 \times 256$. There are two tasks in this dataset: semantic segmentation and depth estimation. Following the setup in \citet{liu2018attention}, we pair the depth estimation task with semantic segmentation using the coarser 7 categories instead of the finer 19 classes in the original CityScapes dataset. Similar to NYUv2 evaluations described in Section~\ref{sec:experiments}, we also combine PCGrad with MTAN~\cite{liu2018attention} and compare it to a range of methods discussed in Appendix~\ref{app:sl_details}. For the combination of PCGrad and MTAN, we only use equal weighting as discussed in~\cite{liu2018attention} as we find it working well in practice. We present the results in Table~\ref{tbl:cityscapes_partial}. As shown in Table~\ref{tbl:cityscapes_partial}, PCGrad + MTAN outperforms MTAN in three out of four scores while obtaining the top scores in both mIoU abd pixel accuracy for the semantic segmentation task, suggesting the effectiveness of PCGrad on realistic image datasets. We also provide the full results including three different weighting schemes in Table~\ref{tbl:cityscapes} in Appendix~\ref{app:full_nyuv2_results}.

\begin{table}[h]
  \centering
  \small
  \def\arraystretch{0.9}
  \setlength{\tabcolsep}{0.35em}
  \begin{tabularx}{0.6\linewidth}{cll*{4}{c}}
  \toprule
 \multicolumn{1}{c}{\multirow{2.5}[4]{*}{\#P.}} & \multicolumn{1}{c}{\multirow{2.5}[4]{*}{Architecture}} & \multicolumn{2}{c}{Segmentation} & \multicolumn{2}{c}{Depth}  \\
  \cmidrule(lr){3-4} \cmidrule(lr){5-6}
   &\multicolumn{1}{c}{} & \multicolumn{2}{c}{(Higher Better)} & \multicolumn{2}{c}{(Lower Better)} \\
     \multicolumn{1}{c}{} & \multicolumn{1}{c}{} & \multicolumn{1}{c}{mIoU}  & \multicolumn{1}{c}{Pix Acc}  & \multicolumn{1}{c}{Abs Err} & \multicolumn{1}{c}{Rel Err}  \\
\midrule
   2 &   One Task & 51.09 & 90.69 & 0.0158 & 34.17   \\
   3.04 & STAN   &  51.90&90.87 & 0.0145 & \mybox{\bf 27.46} \\
  \midrule
    1.75 & Split, Wide  & 50.17 & 90.63  & 0.0167  & 44.73  \\
   \cmidrule(lr){1-6}
    2& Split, Deep &  49.85  & 88.69 & 0.0180  & 43.86   \\
   \cmidrule(lr){1-6}
    3.63 & Dense &  51.91  & 90.89  & \mybox{\bf 0.0138} & 27.21  \\
  \cmidrule(lr){1-6}
    $\approx$2& Cross-Stitch \cite{misra2016cross} &  50.08 & 90.33 & 0.0154 & 34.49    \\
  \cmidrule(lr){1-6}
    1.65 & MTAN   & 53.04 & 91.11 &  0.0144 & 33.63  \\
  \cmidrule(lr){1-6}
     1.65& PCGrad+MTAN (Ours) & \mybox{\bf 53.59} & \mybox{\bf 91.45} & 0.0171 & 31.34\\
    \bottomrule
    \end{tabularx}%
     \caption{We present the 7-class semantic segmentation and depth estimation results on CityScapes dataset. We use \#P to denote the number of parameters of the network. We use box and bold text to highlight the method that achieves the best validation score for each task. As seen in the results, PCGrad+MTAN with equal weights outperforms MTAN with equal weights in three out of four scores while achieving the top score both scores in the segmentation task.}
    \label{tbl:cityscapes_partial}
    \vspace{-0.25cm}
\end{table}

\section{Goal-Conditioned Reinforcement Learning Results}
\label{app:goal-conditioned}

\neurips{For our goal-conditioned RL evaluation, \icml{we adopt the goal-conditioned robotic pushing task with a Sawyer robot} where the goals are represented as the concatenations of the initial positions of the puck to be pushed and the its goal location, both of which are uniformly sampled (details in Appendix~\ref{app:goal_conditioned_details}). We also apply the temperature adjustment strategy as discussed in Section~\ref{sec:pcgrad_practical} to predict the temperature for entropy term given the goal. We summarize the results in Figure~\ref{fig:goal-conditioned}. PCGrad with SAC achieves better performance in terms of average distance to the goal position, while the vanilla SAC agent is struggling to successfully accomplish the task. This suggests that PCGrad is able to ease the RL optimization problem also when the task distribution is continuous.}

\begin{figure*}[ht]
    \centering
    \includegraphics[width=0.7\columnwidth]{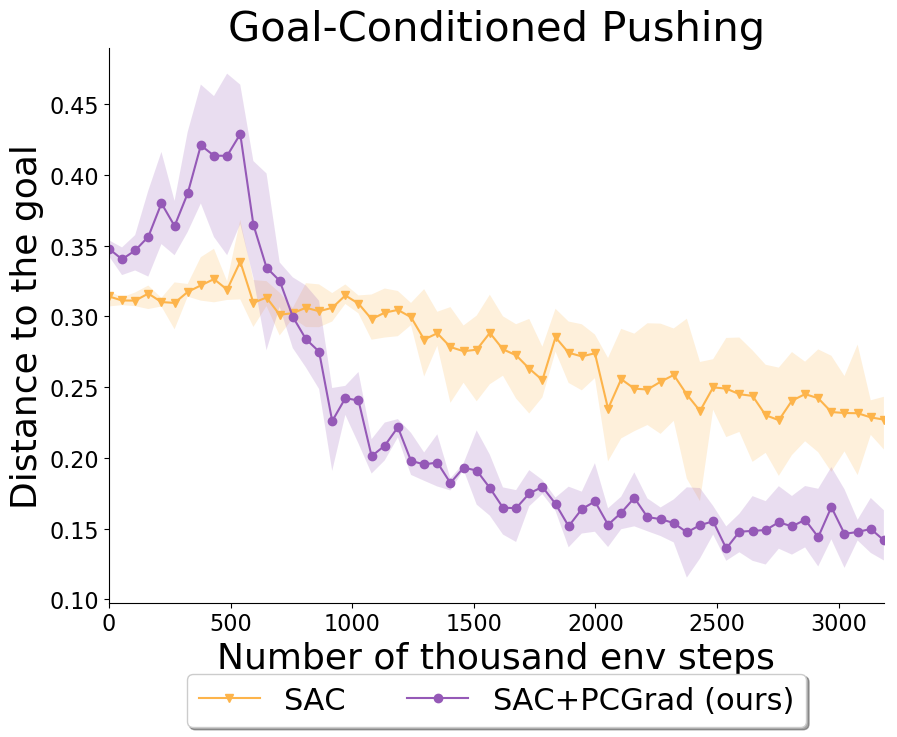}
    \vspace{-0.2cm}
    \caption{\footnotesize We present the goal-conditioned RL results. PCGrad outperforms vanilla SAC in terms of both average distance the goal and data efficiency.}
    \vspace{-0.1cm}
    \label{fig:goal-conditioned}
\end{figure*}

\section{Comparison to CosReg}
\label{app:cosine}

\icmllast{We compare PCGrad to a prior method CosReg~\citep{suteu2019regularizing}, which adds a regularization term to force the cosine similarity between gradients of two different tasks to stay $0$. PCGrad achieves much better average success rate in MT10 benchmark as shown in Figure~\ref{fig:cosreg}. Hence, while it's important to reduce interference between tasks, it's also crucial to keep the task gradients that enjoy positive cosine similarities in order to ensure sharing across tasks.}

\begin{figure*}[ht]
    \centering
    \includegraphics[width=0.5\columnwidth]{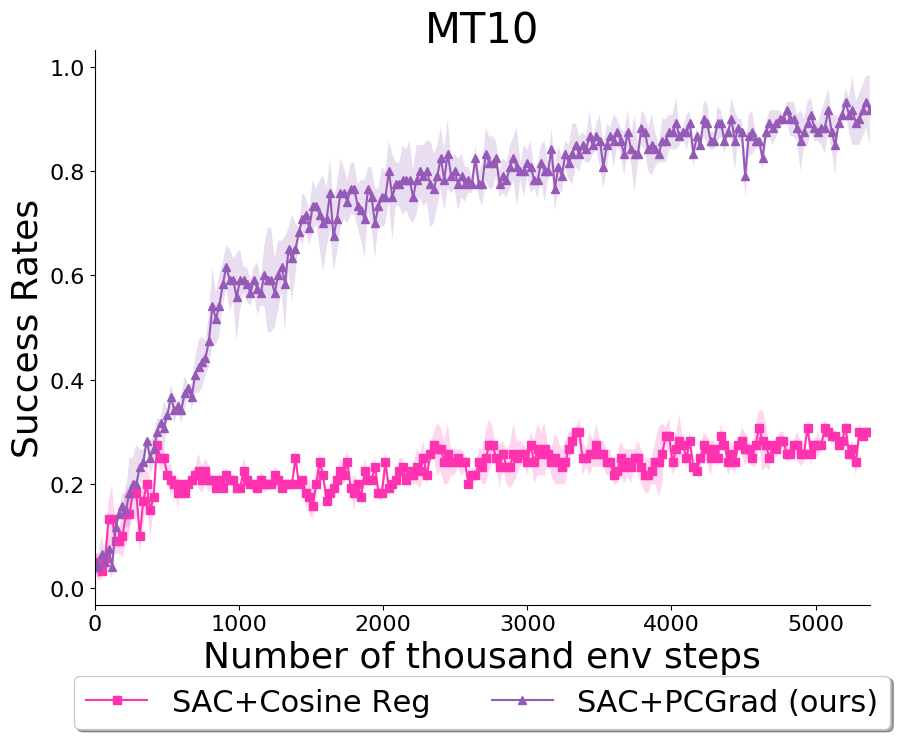}
    \vspace{-0.2cm}
    \caption{\footnotesize Comparison between PCGrad and CosReg~\citep{suteu2019regularizing}. PCGrad outperforms CosReg, suggesting that we should both reduce the interference and keep shared structure across tasks.}
    \vspace{-0.1cm}
    \label{fig:cosreg}
\end{figure*}

\section{Ablation study on the task order}
\label{app:ablation_order}

\arxiv{As stated on line 4 in Algorithm~\ref{alg:dgrad-o}, we sample the tasks from the batch and randomly shuffle the order of the tasks before performing the update steps in PCGrad. With random shuffling, we make PCGrad symmetric w.r.t. the task order in expectation. In Figure~\ref{fig:task_order}, we observe that PCGrad with a random task order achieves better performance between PCGrad with a fixed task order in the setting of MT50 where the number of tasks is large and the conflicting gradient phenomenon is much more likely to happen.}

\begin{figure}[h]
    \centering
    \includegraphics[width=0.6\columnwidth]{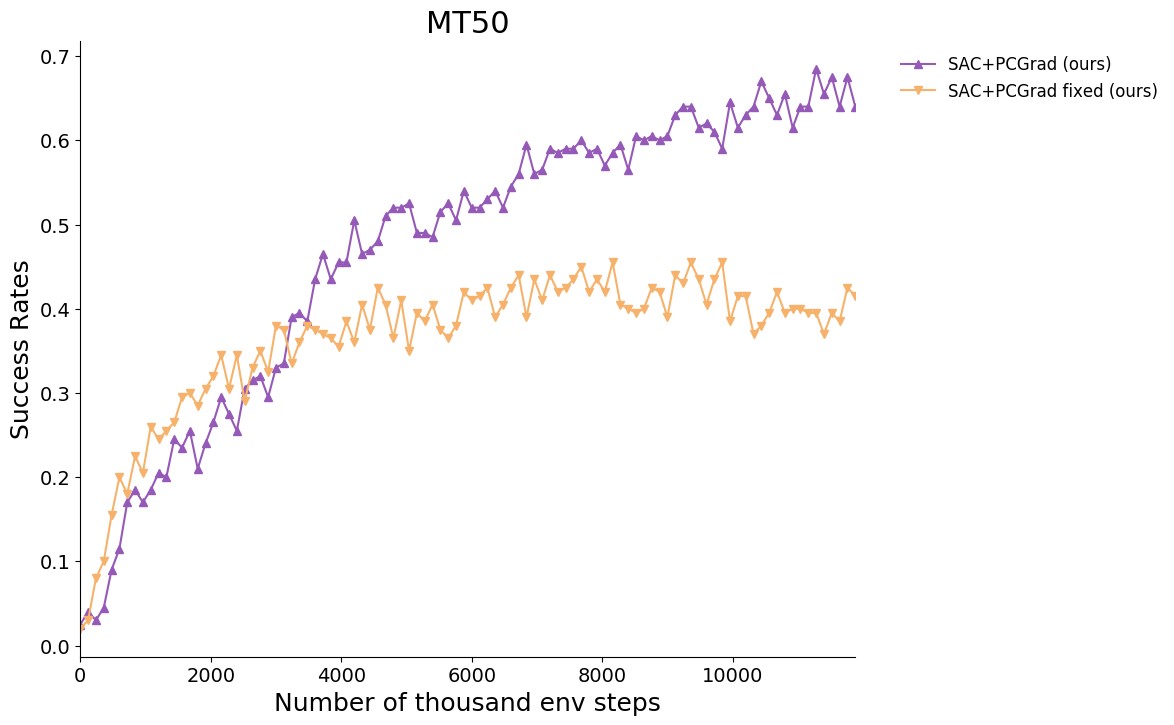}
    \caption{\footnotesize \arxiv{Ablation study on using a fixed task order during PCGrad. PCGrad with a random task order does significantly better PCGrad with a fixed task order in MT50 benchmark.}}
    \vspace{-0.2cm}
    \label{fig:task_order}
\end{figure}

\section{Combining PCGrad with other architectures}
\label{app:combined}

In this subsection, we test whether PCGrad can improve performances when combined with more methods. In Table~\ref{tab:nyuv2_combined}, we find that PCGrad does improve the performance in all four metrics of the three tasks on the NYUv2 dataset when combined with Cross-Stitch~\citep{misra2016cross} and Dense. In Figure~\ref{fig:mt10_combined}, we also show that PCGrad + Multi-head SAC outperforms Multi-head SAC on its own. These results suggest that PCGrad can be flexibly combined with any multi-task learning architectures to further improve performance.

\begin{table}[h]
  \def\arraystretch{0.9}
  \setlength{\tabcolsep}{0.42em}
    \begin{tabularx}{0.9\linewidth}{cccc*{9}{c}}
  \toprule
 \multicolumn{1}{c}{\multirow{2}[2]{*}{Method}} & \multicolumn{1}{c}{Segmentation} & \multicolumn{1}{c}{Depth}  & \multicolumn{2}{c}{Surface Normal}\\
  \cmidrule(lr){4-5}
   \multicolumn{1}{c}{} & \multicolumn{1}{c}{mIoU} &  \multicolumn{1}{c}{Abs Err}   & \multicolumn{1}{c}{Angle Distance}  & \multicolumn{1}{c}{Within $11.25^\circ$} \\
\midrule
  Cross-Stitch   & 15.69  &  0.6277   & 32.69 &  21.63 \\
  Cross-Stitch + PCGrad  &  {\bf 18.14}  &  {\bf 0.5805}   & {\bf 31.38} &  {\bf 21.75}  \\
  \cmidrule(lr){1-5}
  Dense & 16.48  & 0.6282  & 31.68 & 21.73\\
  Dense + PCGrad & {\bf 18.08}  &  {\bf 0.5850}   & {\bf 30.17} &  {\bf 23.29}\\
    \bottomrule
    \end{tabularx}
    \vspace{0.1cm}
         \caption{\footnotesize We show the performances of PCGrad combined with other methods on three-task learning on the NYUv2 dataset, where PCGrad further improves the results of prior multi-task learning architectures.
     \label{tab:nyuv2_combined}
     }
\end{table}

\begin{figure}[h]
		\centering
		\includegraphics[width=0.6\columnwidth]{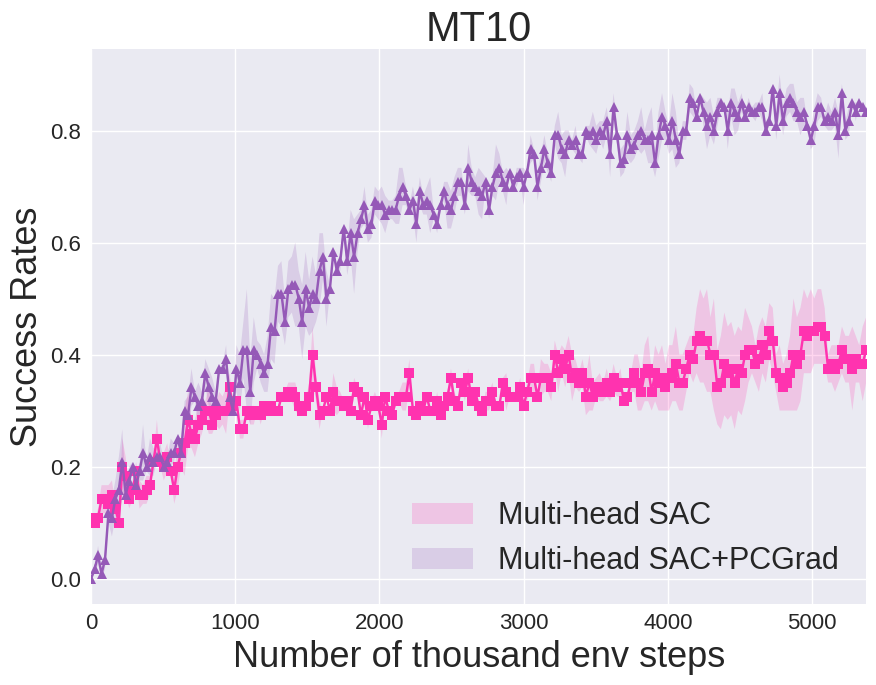}
		\vspace{-0.1cm}
		\captionof{figure}{\footnotesize We show the comparison between Multi-head SAC and Multi-head SAC + PCGrad on MT10. Multi-head SAC + PCGrad outperforms Multi-head SAC, suggesting that PCGrad can improves the performance of multi-headed architectures in the multi-task RL settings.}
		\label{fig:mt10_combined}
\end{figure}

\section{Experiment Details}
\label{app:exp_details}

\subsection{Multi-Task Supervised Learning Experiment Details}
\label{app:sl_details}

For all the multi-task supervised learning experiments, PCGrad converges within 12 hours on a NVIDIA TITAN RTX GPU while the vanilla models without PCGrad converge within 8 hours. PCGrad consumes at most 10 GB memory on GPU while the vanilla method consumes 6GB on GPU among all experiments.

For our CIFAR-100 multi-task experiment, we adopt the architecture used in~\cite{rosenbaum2019routing}, which is a convolutional neural network that consists of 3 convolutional layers with $160$ $3\times3$ filters each layer and 2 fully connected layers with $320$ hidden units. As for experiments on the NYUv2 dataset, we follow~\cite{liu2018attention} to use SegNet~\citep{badrinarayanan2017segnet} as the backbone architecture.

We use five algorithms as baselines in the CIFAR-100 multi-task experiment: \textbf{task specific-1-fc} \citep{rosenbaum2017routing}: a convolutional neural network shared across tasks except that each task has a separate last fully-connected layer, \textbf{task specific-1-fc} \citep{rosenbaum2017routing} : all the convolutional layers shared across tasks with separate fully-connected layers for each task, \textbf{cross stitch-all-fc} \citep{misra2016crossstitch}: one convolutional neural network per task along with cross-stitch units to share features across tasks, \textbf{routing-all-fc + WPL} \citep{rosenbaum2019routing}: a network that employs a trainable router trained with multi-agent RL algorithm (WPL) to select trainable functions for each task, \textbf{independent}: training separate neural networks for each task.

For comparisons on the NYUv2 dataset, we consider 5 baselines: \textbf{Single Task, One Task}: the vanilla SegNet used for single-task training, \textbf{Single Task, STAN}~\citep{liu2018attention}: the single-task version of MTAN as mentioned below, \textbf{Multi-Task, Split, Wide / Deep}~\citep{liu2018attention}: the standard SegNet shared for all three tasks except that each task has a separate last layer for final task-specific prediction with two variants \textbf{Wide} and \textbf{Deep} specified in ~\cite{liu2018attention}, \textbf{Multi-Task Dense}: a shared network followed by separate task-specific networks, \textbf{Multi-Task Cross-Stitch}~\citep{misra2016crossstitch}: similar to the baseline used in CIFAR-100 experiment but with SegNet as the backbone, \textbf{MTAN}~\citep{liu2018attention}: a shared network with a soft-attention module for each task.

\begin{figure}[t]
    \centering
    \includegraphics[width=1.0\columnwidth]{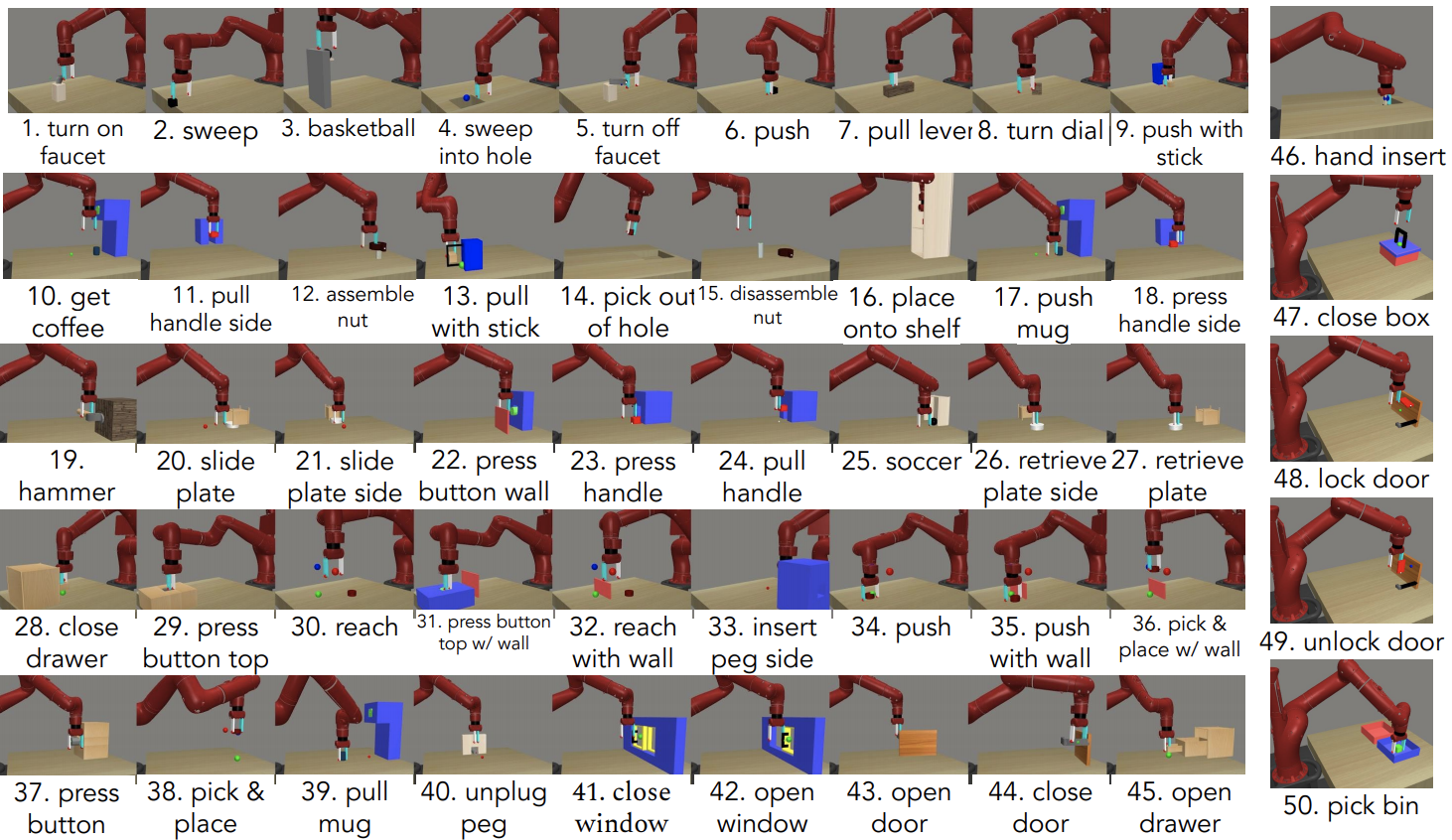}
    \vspace{-0.7cm}
    \caption{\footnotesize The $50$ tasks of MT50 from Meta-World~\citep{metaworld}. MT10 is a subset of these tasks, which includes reach, push, pick \& place, open drawer, close drawer, open door, press button top, open window, close window, and insert peg inside.
    }
    \label{fig:metaworld}
\end{figure}

\subsection{Multi-Task Reinforcement Learning Experiment Details}
\label{app:rl_details}

Our reinforcement learning experiments all use the SAC~\citep{haarnoja2018sac} algorithm as the base algorithm, where the actor and the critic are represented as 6-layer fully-connected feedforward neural networks for all methods. The numbers of hidden units of each layer of the neural networks are $160$, $300$ and $200$ for MT10, MT50 and goal-conditioned RL respectively.  For the multi-task RL experiments, PCGrad + SAC converges in 1 day (5M simulation steps) and 5 days (20M simulation steps) on the MT10 and MT50 benchmarks respectively on a NVIDIA TITAN RTX GPU while vanilla SAC converges in 12 hours and 3 days on the two benchmarks respectively. PCGrad + SAC consumes 1 GB and 6 GB memory on GPU on the MT10 and MT50 benchmarks respectively while the vanilla SAC consumes 0.5 GB and 3 GB respectively.

\icml{In the case of multi-task reinforcement learning, we evaluate our algorithm on the recently proposed Meta-World benchmark~\citep{metaworld}. This benchmark includes a variety of simulated robotic manipulation tasks contained in a shared, table-top environment with a simulated Sawyer arm (visualized in Fig.~\ref{fig:metaworld}). 
In particular, we use the multi-task \arxiv{benchmarks MT10 and MT50}, which consists of the 10 tasks \arxiv{and 50 tasks respectively} depicted in Fig.~\ref{fig:metaworld} that require diverse strategies to solve them, which makes them difficult to optimize jointly with a single policy. \arxiv{Note that MT10 is a subset of MT50.}}
At each data collection step, we collect $600$ samples for each task, and at each training step, we sample $128$ datapoints per task from corresponding replay buffers. We measure success according to the metrics used in the Meta-World benchmark where the reported the success rates are averaged across tasks. For all methods, we apply the temperature adjustment strategy as discussed in Section~\ref{sec:pcgrad_practical} to learn a separate alpha term per task as the task encoding in MT10 and MT50 is just a one-hot encoding. 

On the multi-task and goal-conditioned RL domain, we apply PCGrad to the vanilla SAC algorithm with task encoding as part of the input to the actor and the critic as described in Section~\ref{sec:pcgrad_practical} and compare PCGrad to the vanilla \textbf{SAC} without PCGrad and training actors and critics for each task individually (\textbf{Independent}).

\subsection{Goal-conditioned Experiment Details}
\label{app:goal_conditioned_details}

We use the pushing environment from the Meta-World benchmark~\citep{metaworld} as shown in Figure~\ref{fig:metaworld}. In this environment, the table spans from $[-0.4, 0.2]$ to $[0.4, 1.0]$ in the 2D space. To construct the goals, we sample the intial positions of the puck from the range $[-0.2, 0.6]$ to $[0.2, 0.7]$ on the table and the goal positions from the range $[-0.2, 0.85]$ to $[0.2, 0.95]$ on the table. The goal is represented as a concatenation of the initial puck position and the goal position. Since in the goal-conditioned setting, the task distribution is continuous, we sample a minibatch of $9$ goals and $128$ samples per goal at each training iteration and also sample $600$ samples per goal in the minibatch at each data collection step.

\subsection{Full CityScapes and NYUv2 Results}
\label{app:full_nyuv2_results}

We provide the full comparison on the CityScapes and NYUv2 datasets in Table~\ref{tbl:cityscapes} and  Table~\ref{tab:nyu_results_full} respectively.

\begin{table}[h]
  \centering
  \small
  \def\arraystretch{0.9}
  \setlength{\tabcolsep}{0.35em}
  \begin{tabularx}{0.82\linewidth}{cll*{4}{c}}
  \toprule
 \multicolumn{1}{c}{\multirow{2.5}[4]{*}{\#P.}} & \multicolumn{1}{c}{\multirow{2.5}[4]{*}{Architecture}} & \multicolumn{1}{c}{\multirow{2.5}[4]{*}{Weighting}} & \multicolumn{2}{c}{Segmentation} & \multicolumn{2}{c}{Depth}  \\
  \cmidrule(lr){4-5} \cmidrule(lr){6-7}
   &\multicolumn{1}{c}{} & \multicolumn{1}{c}{} & \multicolumn{2}{c}{(Higher Better)} & \multicolumn{2}{c}{(Lower Better)} \\
     \multicolumn{1}{c}{} & \multicolumn{1}{c}{} & \multicolumn{1}{c}{} & \multicolumn{1}{c}{mIoU}  & \multicolumn{1}{c}{Pix Acc}  & \multicolumn{1}{c}{Abs Err} & \multicolumn{1}{c}{Rel Err}  \\
\midrule
   2 &   One Task  &n.a. & 51.09 & 90.69 & 0.0158 & 34.17   \\
   3.04 & STAN & n.a.   &  51.90&90.87 & 0.0145 & 27.46 \\
  \midrule
    &   &Equal Weights  & 50.17 & 90.63  & 0.0167  & 44.73  \\
   1.75 & Split, Wide    & Uncert. Weights \cite{kendall2017multi}    & {\bf 51.21} & {\bf 90.72} & {\bf 0.0158} &  44.01   \\
    &    & DWA, $T = 2$ & 50.39 & 90.45 & 0.0164 & {\bf 43.93}     \\
   \cmidrule(lr){1-7}
    &   &Equal Weights  &  {\bf 49.85}  & 88.69 & 0.0180  & 43.86   \\
    2  & Split, Deep    & Uncert. Weights \cite{kendall2017multi}   & 48.12  & 88.68 & {\bf 0.0169} & {\bf 39.73}  \\
    &    & DWA, $T = 2$ & 49.67  & {\bf 88.81}& 0.0182 & 46.63   \\
   \cmidrule(lr){1-7}
    &    & Equal Weights & {\bf 51.91}  & 90.89  & 0.0138 & 27.21  \\
   3.63 & Dense   &Uncert. Weights \cite{kendall2017multi}  &51.89 & {\bf 91.22} & \mybox{\bf 0.0134} & \mybox{\bf 25.36}    \\
    &     &DWA, $T = 2$   & 51.78 & 90.88 & 0.0137 & 26.67     \\
  \cmidrule(lr){1-7}
    &    & Equal Weights &  50.08 & 90.33 & 0.0154 & 34.49    \\
   $\approx$2& Cross-Stitch \cite{misra2016cross}    &Uncert. Weights \cite{kendall2017multi}  & 50.31 & 90.43 & {\bf 0.0152} & {\bf 31.36}    \\
   &     &DWA, $T = 2$   & {\bf 50.33} & {\bf 90.55} & 0.0153 & 33.37 \\
  \cmidrule(lr){1-7}
    &    & Equal Weights   & 53.04 & 91.11 &  {\bf 0.0144} & 33.63  \\
  1.65 & MTAN & Uncert. Weights \cite{kendall2017multi} & \mybox{\bf 53.86} & 91.10  &0.0144  & 35.72  \\
      & &DWA, $T = 2$     & 53.29 & 91.09 & 0.0144  & 34.14 \\
     1.65& PCGrad+MTAN (Ours) & Equal Weights & 53.59 & \mybox{\bf 91.45} & 0.0171 & {\bf 31.34}\\
    \bottomrule
    \end{tabularx}%
     \caption{We present the 7-class semantic segmentation and depth estimation results on CityScapes dataset. We use \#P to denote the number of parameters of the network, and the best performing variant of each architecture is highlighted in bold. We use box to highlight the method that achieves the best validation score for each task. As seen in the results, PCGrad+MTAN with equal weights outperforms MTAN with equal weights in three out of four scores while achieving the top score in pixel accuracy across all methods.}
    \label{tbl:cityscapes}
    \vspace{-0.25cm}
\end{table}

\begin{table*}[h]
  \centering
  \scriptsize
  \def\arraystretch{0.9}
  \setlength{\tabcolsep}{0.2em}
  \begin{tabularx}{0.99\linewidth}{ccll*{9}{c}}
  \toprule
  \multicolumn{1}{c}{\multirow{3.5}[4]{*}{Type}} & \multicolumn{1}{c}{\multirow{3.5}[4]{*}{\#P.}} & \multicolumn{1}{c}{\multirow{3.5}[4]{*}{Architecture}} & \multicolumn{1}{c}{\multirow{3.5}[4]{*}{Weighting}} & \multicolumn{2}{c}{Segmentation} & \multicolumn{2}{c}{Depth}  & \multicolumn{5}{c}{Surface Normal}\\
  \cmidrule(lr){5-6} \cmidrule(lr){7-8} \cmidrule(lr){9-13}
   &\multicolumn{1}{c}{}  &\multicolumn{1}{c}{} & \multicolumn{1}{c}{} & \multicolumn{2}{c}{\multirow{1.5}[2]{*}{(Higher Better)}} & \multicolumn{2}{c}{\multirow{1.5}[2]{*}{(Lower Better)}}   & \multicolumn{2}{c}{Angle Distance}  & \multicolumn{3}{c}{Within $t^\circ$} \\
    &\multicolumn{1}{c}{}  &\multicolumn{1}{c}{} & \multicolumn{1}{c}{} & \multicolumn{1}{c}{} & \multicolumn{1}{c}{}  & \multicolumn{1}{c}{}  & \multicolumn{1}{c}{} & \multicolumn{2}{c}{(Lower Better)} & \multicolumn{3}{c}{(Higher Better)} \\
     &\multicolumn{1}{c}{} & \multicolumn{1}{c}{} & \multicolumn{1}{c}{} & \multicolumn{1}{c}{mIoU}  & \multicolumn{1}{c}{Pix Acc}  & \multicolumn{1}{c}{Abs Err} & \multicolumn{1}{c}{Rel Err} & \multicolumn{1}{c}{Mean}  & \multicolumn{1}{c}{Median}  & \multicolumn{1}{c}{11.25} & \multicolumn{1}{c}{22.5} & \multicolumn{1}{c}{30} \\
\midrule
  \multirow{2}*{Single Task}  &  3 &   One Task  &n.a.  &  15.10  &51.54  & 0.7508 & 0.3266      &  31.76& 25.51 & 22.12 & 45.33 &  57.13\\
   & 4.56 & STAN$^\dagger$ & n.a.    &  15.73 & 52.89 &  0.6935 &  0.2891 & 32.09 & 26.32 & 21.49 &44.38  & 56.51\\
  \midrule
  \multirow{15}*{Multi Task}  &  &   &Equal Weights & 15.89  & 51.19   &  0.6494   & 0.2804 & 33.69 & 28.91 & {18.54} & 39.91 & 52.02 \\
  & 1.75 & Split, Wide    & Uncert. Weights$^*$  & 15.86  & 51.12  &  {\bf 0.6040} & 0.2570 & {\bf 32.33}  & {\bf 26.62} & {\bf 21.68} & {\bf 43.59} & {\bf 55.36 }\\
  &  &    & DWA$^\dagger$, $T = 2$  & {\bf 16.92}  & {\bf 53.72}  &  0.6125  & {\bf 0.2546} & 32.34  &27.10  & 20.69  &  42.73 & 54.74 \\
   \cmidrule(lr){2-13}
    &  &   &Equal Weights & 13.03  & 41.47  &  0.7836 & 0.3326 & 38.28  & 36.55 & 9.50 & 27.11 & 39.63 \\
  & 2 & Split, Deep    & Uncert. Weights$^*$  & {\bf 14.53}  & 43.69  & 0.7705  & 0.3340 & {\bf 35.14}  & {\bf 32.13 }& {\bf 14.69} & {\bf 34.52} &  {\bf 46.94 }\\
  &  &    & DWA$^\dagger$, $T = 2$  & 13.63  & {\bf 44.41}  & {\bf 0.7581}  & {\bf 0.3227 }& 36.41  & 34.12 & 12.82 & 31.12 & 43.48 \\
   \cmidrule(lr){2-13}
  &  &    &Equal Weights  & 16.06   & 52.73   &  0.6488   & 0.2871  & 33.58 & 28.01 & {20.07} & 41.50 & 53.35 \\
  & 4.95 & Dense   &Uncert. Weights$^*$  & {\bf 16.48}  & {\bf 54.40}  &   0.6282  & 0.2761  & {\bf 31.68} & {\bf 25.68}  & {\bf 21.73}  & {\bf 44.58} & {\bf 56.65} \\
  &  &     &DWA$^\dagger$, $T = 2$  &  16.15  & 54.35   &  {\bf 0.6059}  &  {\bf 0.2593} &  32.44 & 27.40 & 20.53  & 42.76 & 54.27\\
  \cmidrule(lr){2-13}
  &  &    &Equal Weights   & 14.71  & 50.23   &  0.6481   &  0.2871 & 33.56 & 28.58 & 20.08 & 40.54 & 51.97\\
  &  $\approx$3 & Cross-Stitch$^\ddagger$    &Uncert. Weights$^*$  &  15.69 &  52.60  &  0.6277   & 0.2702 & 32.69 &  27.26 & 21.63 &  42.84 &  54.45 \\
  &  &     &DWA$^\dagger$, $T = 2$   & {\bf 16.11} &  {\bf  53.19} & {\bf 0.5922}   & {\bf 0.2611} & {\bf 32.34} & {\bf 26.91} & {\bf 21.81} & {\bf 43.14} & {\bf 54.92} \\
  \cmidrule(lr){2-13}
  &  &    & Equal Weights &   {\bf 17.72} &  55.32    & {\bf 0.5906}  &  0.2577 & 31.44  & {\bf 25.37}&  \mybox{\bf 23.17} & 45.65 &57.48\\
  &1.77 & MTAN$^\dagger$ & Uncert. Weights$^*$  &  17.67    &  {\bf 55.61}  &  0.5927    & 0.2592 & {\bf 31.25} & 25.57 & 22.99 & {\bf 45.83} & {\bf 57.67} \\
    &    & &DWA$^\dagger$, $T = 2$   &  17.15    &  54.97 &  0.5956  & {\bf 0.2569} & 31.60 & 25.46 & 22.48 & 44.86 & 57.24 \\
      \cmidrule(lr){2-13}
  &1.77 & MTAN$^\dagger$ + PCGrad (ours) & Uncert. Weights$^*$  &  \mybox{\bf 20.17}    &  \mybox{\bf 56.65}  &  \mybox{\bf 0.5904}    & \mybox{\bf 0.2467} & \mybox{\bf 30.01} & \mybox{\bf 24.83} & 22.28 & \mybox{\bf 46.12} & \mybox{\bf 58.77} \\
    \bottomrule
    \end{tabularx}
     \caption{We present the full results on three tasks on the NYUv2 dataset: 13-class semantic segmentation, depth estimation, and surface normal prediction results. \#P shows the total number of network parameters. We highlight the best performing combination of multi-task architecture and weighting in bold. The top validation scores for each task are annotated with boxes. The symbols indicate prior methods: $^*$: \citep{kendall2017multi}, $^\dagger$: \citep{liu2018attention}, $^\ddagger$: \citep{misra2016crossstitch}. Performance of other methods taken from \citep{liu2018attention}.
     }
    \label{tab:nyu_results_full}
\end{table*}

\end{document}